\newcommand{\mymacro}[1]{{#1}}
\newcommand{\defn}[1]{\textbf{#1}}
\newcommand{\inv}[1]{{\mymacro{#1^{-1}}}}
\newcommand{\pdens}{{\mymacro{ p}}}
\newcommand{\qdens}{{\mymacro{ q}}}
\newcommand{\ind}[1]{\mathbbm{1} \left\{ #1 \right\}}
\newcommand{\R}{{\mymacro{ \mathbb{R}}}}
\newcommand{\Rex}{{\mymacro{ \overline{\R}}}}
\newcommand{\Z}{{\mymacro{ \mathbb{Z}}}}
\newcommand{\B}{{\mymacro{ \mathbb{B}}}}
\newcommand{\func}{{\mymacro{ f}}}
\newcommand{\vfunc}{{\mymacro{ \mathbf{f}}}}
\newcommand{\ordering}{{\mymacro{ n}}}
\newcommand{\alphabet}{{\mymacro{ \Sigma}}}
\newcommand{\eosalphabet}{{\mymacro{ \overline{\alphabet}}}}
\newcommand{\lang}{{\mymacro{ L}}}
\newcommand{\kleene}[1]{{\mymacro{#1^*}}}
\newcommand{\str}{{\mymacro{\boldsymbol{y}}}}
\newcommand{\strlt}{{\mymacro{ \str_{<\tstep}}}}
\newcommand{\strlet}{{\mymacro{ \str_{\leq\tstep}}}}
\newcommand{\strlen}{{\mymacro{T}}}
\newcommand{\sym}{{\mymacro{y}}}
\newcommand{\syma}{{\mymacro{a}}}
\newcommand{\symb}{{\mymacro{b}}}
\newcommand{\symc}{{\mymacro{c}}}
\newcommand{\defeq}{\mathrel{\stackrel{\textnormal{\tiny def}}{=}}}
\newcommand{\Zmod}[1]{{\mymacro{\Z_{#1}}}}
\newcommand{\set}[1]{{\mymacro{\left\{ #1 \right\}}}}
\newcommand{\bigo}{{\mymacro{ \mathcal{O}}}}
\newcommand{\idx}{{\mymacro{ n}}}
\newcommand{\idxd}{{\mymacro{ d}}}
\newcommand{\idxi}{{\mymacro{ i}}}
\newcommand{\idxk}{{\mymacro{ k}}}
\newcommand{\idxm}{{\mymacro{ m}}}
\newcommand{\nstates}{{\mymacro{ |\states|}}}
\newcommand{\nsymbols}{{\mymacro{ |\alphabet|}}}
\newcommand{\tstep}{{\mymacro{ t}}}
\newcommand{\pLM}{\mymacro{\pdens}}
\newcommand{\qLM}{\mymacro{\qdens}}
\newcommand{\pLNSM}{\mymacro{\pdens}}
\newcommand{\pLNSMFun}[2]{{\mymacro{\pLNSM\left(#1\mid#2\right)}}}
\newcommand{\pLN}{\mymacro{\pdens}}
\newcommand{\eos}{{\mymacro{\textsc{eos}}}}
\newcommand{\ngram}{{\mymacro{ \textit{n}-gram}}}
\newcommand{\embedDim}{{\mymacro{ R}}}
\newcommand{\boundedDyck}[2]{{\mymacro{ \mathrm{D}\!\left(#1, #2\right)}}}
\newcommand{\hToQ}{{\mymacro{ s}}}
\newcommand{\hToQFun}[1]{{\mymacro{ \hToQ \!\left( #1 \right)}}}
\newcommand{\hToQinv}{{\mymacro{ \inv{s}}}}
\newcommand{\hToQinvFun}[1]{{\mymacro{ \hToQinv\!\left( #1 \right)}}}
\newcommand{\onehot}[1]{{\mymacro{ \llbracket#1\rrbracket}}}
\newcommand{\inEmbedding}{{\mymacro{ \vr}}}
\newcommand{\inEmbeddingFun}[2][]{{\mymacro{ \inEmbedding\!\left(#2\right)}}}
\newcommand{\inEmbedSymt}{{\mymacro{ \inEmbeddingFun{\sym_\tstep}}}}
\newcommand{\symt}{{\mymacro{ \sym_{\tstep}}}}
\newcommand{\symT}{{\mymacro{ \sym_{\strlen}}}}
\newcommand{\symtplus}{{\mymacro{ \sym_{\tstep+1}}}}
\newcommand{\symTminus}{{\mymacro{ \sym_{T-1}}}}
\newcommand{\biasVech}{{\mymacro{ \vb}}}
\newcommand{\one}{{\mymacro{\mathbf{1}}}}
\newcommand{\automaton}{{\mymacro{ \mathcal{A}}}}
\newcommand{\wfsa}{{\mymacro{ \automaton}}}
\newcommand{\stateq}{{\mymacro{ q}}}
\newcommand{\states}{{\mymacro{ Q}}}
\newcommand{\trans}{{\mymacro{ \delta}}}
\newcommand{\weight}{{\mymacro{ \textnormal{w}}}}
\newcommand{\apath}{{\mymacro{ \boldsymbol \pi}}}
\newcommand{\pathlen}{{\mymacro{ N}}}
\newcommand{\paths}{{\mymacro{ \Pi}}}
\newcommand{\initial}{{\mymacro{ I}}}
\newcommand{\final}{{\mymacro{ F}}}
\newcommand{\initf}{{\mymacro{ \lambda}}}
\newcommand{\finalf}{{\mymacro{ \rho}}}
\newcommand{\initfFun}[1]{{\mymacro{\initf\left(#1\right)}}}
\newcommand{\finalfFun}[1]{{\mymacro{\finalf\left(#1\right)}}}
\newcommand{\qinit}{{\mymacro{ q_{\iota}}}}
\newcommand{\qfinal}{{\mymacro{ q_{\varphi}}}}
\newcommand{\transitionWeight}{{\mymacro{ \tau}}}
\newcommand{\transitionWeightFun}[1]{{\mymacro{ \transitionWeight(#1)}}}
\newcommand{\fsatuple}{{\mymacro{ \left( \alphabet, \states, \initial, \final, \trans \right)}}}
\newcommand{\wfsatuple}{{\mymacro{ \left( \alphabet, \states, \trans, \initf, \finalf \right)}}}
\newcommand{\edge}[4]{{\mymacro{#1 \xrightarrow{#2 / #3} #4}}}
\newcommand{\uwedge}[3]{{\mymacro{#1 \xrightarrow{#2} #3}}}
\newcommand{\yield}{{\mymacro{\textbf{s}}}}
\newcommand{\elmanrnntuple}{{\mymacro{ \left( \alphabet, \sigmoid, \hiddDim, \recMtx, \inMtx, \biasVech, \hiddStateZero\right)}}}
\newcommand{\elmanUpdate}[2]{{\mymacro{ \sigmoid\left(\recMtx #1 + \inMtx \onehot{#2} + \biasVech\right)}}}
\newcommand{\rnn}{{\mymacro{ \mathcal{R}}}}
\newcommand{\rnnlm}{{\mymacro{\left(\rnn, \outMtx\right)}}}
\newcommand{\ernnAcr}{{\mymacro{ERNN}}}
\newcommand{\hernnAcr}{{\mymacro{HRNN}}}
\newcommand{\dpfsaAcr}{{\mymacro{DPFSA}}}
\newcommand{\recMtx}{{\mymacro{ \mU}}}
\newcommand{\inMtx}{{\mymacro{ \mV}}}
\newcommand{\outMtx}{{\mymacro{ \mE}}}
\newcommand{\eRecMtx}{{\mymacro{ \emU}}}
\newcommand{\eInMtx}{{\mymacro{ \emV}}}
\newcommand{\hiddDim}{{\mymacro{ D}}}
\newcommand{\Rhid}{{\mymacro{ \R^\hiddDim}}}
\newcommand{\hiddState}{{\mymacro{ \vh}}}
\newcommand{\hiddStatet}{{\mymacro{ \hiddState_\tstep}}}
\newcommand{\hiddStateT}{{\mymacro{ \hiddState_\strlen}}}
\newcommand{\hiddStateTminus}{{\mymacro{ \hiddState_{\strlen - 1}}}}
\newcommand{\hiddStatetminus}{{\mymacro{ \hiddState_{\tstep - 1}}}}
\newcommand{\vhzero}{{\mymacro{ \vh_0}}}
\newcommand{\hiddStateZero}{{\mymacro{ \vhzero}}}
\newcommand{\vhtminus}{{\mymacro{ \vh_{t-1}}}}
\newcommand{\symordering}{{\mymacro{ m}}}
\newcommand{\eossymordering}{{\mymacro{\overline{\symordering}}}}
\newcommand{\qt}{{\mymacro{ \stateq_\tstep}}}
\newcommand{\qT}{{\mymacro{ \stateq_\strlen}}}
\newcommand{\qTminus}{{\mymacro{ \stateq_\strlen}}}
\newcommand{\qtplus}{{\mymacro{ \stateq_{\tstep + 1}}}}
\newcommand{\Simplexdminus}{{\mymacro{ \boldsymbol{\Delta}^{D-1}}}}
\newcommand{\SimplexEosalphabetminus}{{\mymacro{ \boldsymbol{\Delta}^{|\eosalphabet|-1}}}}
\DeclareMathSymbol{\mlq}{\mathord}{operators}{``} 
\DeclareMathSymbol{\mrq}{\mathord}{operators}{`'} 
\newcommand{\negterm}[1]{{\mymacro{ {\raise.17ex\hbox{$\scriptstyle\sim$}} #1}}}
\newcommand{\otherwisecondition}{\textbf{otherwise }}
\newcommand{\algoName}[1]{{\mymacro{\textsc{#1}}}}
\newcommand{\separationAlgoName}{{\algoName{Separate}}}
\newcommand{\ignore}[1]{}
\newcommand{\expandLater}[1]{}
\def\1{\mathbf{1}}
\def\eps{{\mymacro{ \varepsilon}}}
\def\rvp{{{\mymacro{ \mathbf{p}}}}}
\def\vb{{{\mymacro{ \mathbf{b}}}}}
\def\vh{{{\mymacro{ \mathbf{h}}}}}
\def\vp{{{\mymacro{ \mathbf{p}}}}}
\def\vr{{{\mymacro{ \mathbf{r}}}}}
\def\vv{{{\mymacro{ \mathbf{v}}}}}
\def\vx{{{\mymacro{ \mathbf{x}}}}}
\def\evv{{{\mymacro{ v}}}}
\def\evx{{{\mymacro{ x}}}}
\def\mE{{{\mymacro{ \mathbf{E}}}}}
\def\mU{{{\mymacro{ \mathbf{U}}}}}
\def\mV{{{\mymacro{ \mathbf{V}}}}}
\def\emU{{\mymacro{ U}}}
\def\emV{{\mymacro{ V}}}
\newcommand{\N}{{\mymacro{ \mathbb{N}}}}
\newcommand{\Nzero}{{\mymacro{ \mathbb{N}_{\geq 0}}}}
\newcommand{\projfuncEosalphabetminus}{{\mymacro{\vfunc}}}
\newcommand{\projfuncEosalphabetminusFunc}[1]{{\mymacro{\vfunc\left(#1\right)}}}
\newcommand{\ReLU}{{\mymacro{ \mathrm{ReLU}}}}
\newcommand{\softmaxfunc}[2]{{\mymacro{ \mathrm{softmax}\!\left(#1\right)_{#2}}}} 
\newcommand{\sparsemaxfunc}[2]{{\mymacro{ \mathrm{sparsemax}\!\left(#1\right)_{#2}}}} 
\newcommand{\heaviside}{{\mymacro{ H}}}
\newcommand{\sigmoid}{{\mymacro{ \sigma}}}
\DeclareMathOperator*{\argmin}{{\mymacro{ argmin}}}
\newcommand{\bigO}[1]{{\mymacro{ \mathcal{O}\left(#1\right)}}}
\title{Recurrent Neural Language Models as Probabilistic Finite-state Automata}
\author{
Anej Svete%
~\;~\;~Ryan Cotterell\\
\texttt{\{\href{asvete@inf.ethz.ch}{asvete}, \href{ryan.cotterell@inf.ethz.ch}{ryan.cotterell}\}@inf.ethz.ch}\\
    {%
\setlength{\fboxsep}{2.5pt}%
\setlength{\fboxrule}{2.5pt}%
\fcolorbox{white}{white}{
    \includegraphics[width=.15\linewidth]{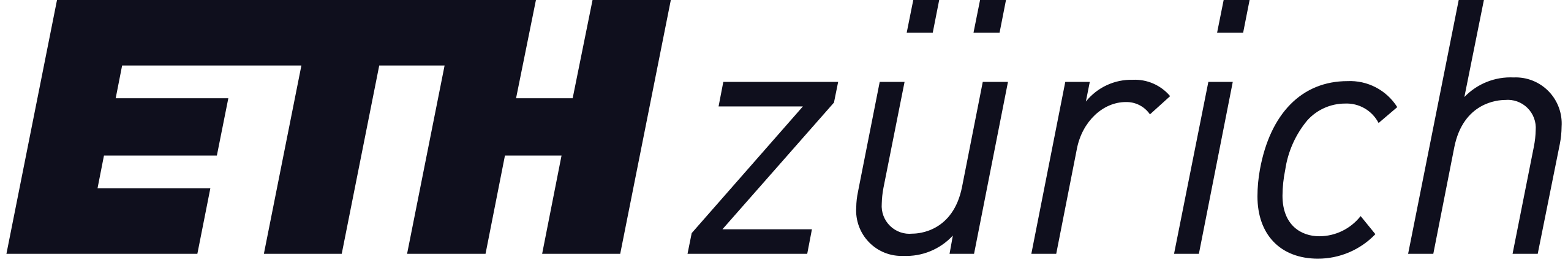}
}
}}
\begin{document}
\maketitle

\begin{abstract}
Studying language models (LMs) in terms of well-understood formalisms allows us to precisely characterize their abilities and limitations.
Previous work has investigated the representational capacity of recurrent neural network (RNN) LMs in terms of their capacity to recognize unweighted formal languages.
However, LMs do not describe unweighted formal languages---rather, they define \emph{probability distributions} over strings.
In this work, we study what classes of such probability distributions RNN LMs can represent,
which allows us to make more direct statements about their capabilities.
We show that simple RNNs are equivalent to a subclass of probabilistic finite-state automata, and can thus model a strict subset of probability distributions expressible by finite-state models.
Furthermore, we study the space complexity of representing finite-state LMs with RNNs.
We show that, to represent an arbitrary deterministic finite-state LM with $N$ states over an alphabet $\alphabet$, an RNN requires $\Omega\left(N \nsymbols\right)$ neurons.
These results present a first step towards characterizing the classes of distributions RNN LMs can represent and thus help us understand their capabilities and limitations.

\vspace{0.5em}
\hspace{.5em}\includegraphics[width=1.25em,height=1.25em]{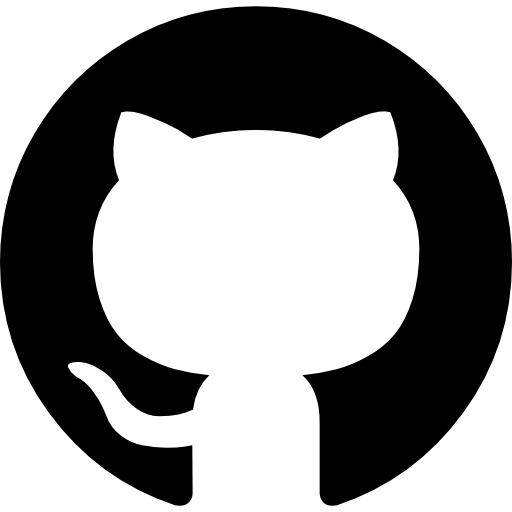}\hspace{.75em}\parbox{\dimexpr\linewidth-2\fboxsep-2\fboxrule}{\url{https://github.com/rycolab/weighted-minsky}}
\vspace{-.5em}
\end{abstract}

\section{Introduction} \label{sec:intro}
We start with a few definitions.
An \defn{alphabet} $\alphabet$ is a finite, non-empty set.
A \defn{formal language} is a subset of $\alphabet$'s Kleene closure $\kleene{\alphabet}$, and a \defn{language model} (LM) $\pLM$ is a probability distribution over $\kleene{\alphabet}$.
LMs have demonstrated utility in a variety of NLP tasks and have recently been proposed as a general model of computation for a wide variety of problems requiring (algorithmic) reasoning \citep[][\textit{inter alia}]{NEURIPS2020_1457c0d6,chen2021evaluating,hoffmann2022training,chowdhery2022palm,wei2022emergent,NEURIPS2022_9d560961,kojima2023large,kim2023language}.
Our paper asks a simple question: How can we characterize the representational capacity of an LM based on a recurrent neural network (RNN)?
In other words: What classes of probability distributions over strings can RNNs represent?\looseness=-1

\begin{figure}
    \centering
    \relsize{-1}
    \begin{tikzpicture}
        
        \draw[draw=none, fill=ETHPurple!15] (-3.75, -3) rectangle (3.4, 3.5);
        \node[align=center] at (-2, 2.25) {\textcolor{ETHPurple}{Probabilistic} \\ \textcolor{ETHPurple}{finite-state} \\ \textcolor{ETHPurple}{language models}};
        \node[align=center] at (1.6, 3) {Probabilistic FSAs};
        \node[align=center] at (1.6, 2.25) {Probabilistic \\ regular grammars};
        \node[align=center] at (1.6, 1.5) {Discrete HMMs};
        
        \draw[dashed] (-3.75, 1) -- (3.4, 1);
        \draw[draw=none, fill=ETHRed!15] (-3.75, -3) rectangle (3.4, 1);
        \node[align=center] at (-2, 0) {\textbf{\textcolor{ETHRed}{Deterministic}} \\ \textbf{\textcolor{ETHRed}{probabilistic}} \\ \textbf{\textcolor{ETHRed}{finite-state}} \\ \textbf{\textcolor{ETHRed}{language models}}};
        \node[align=center] at (1.6, 0.4) {\textbf{Deterministic} \\ \textbf{Probabilistic FSAs}};
        \node[align=center] at (1.6, -0.4) {\textbf{Heaviside Elman RNNs}};

        \draw[dashed] (-3.75, -1) -- (3.4, -1);
        \draw[draw=none, fill=ETHBronze!15] (-3.75, -3) rectangle (3.4, -1);
        \node[align=center] at (-2, -2) {\textcolor{ETHBronze}{Efficiently} \\ \textcolor{ETHBronze}{RNN-representable} \\ \textcolor{ETHBronze}{finite-state} \\ \textcolor{ETHBronze}{language models}};
        \node[align=center] at (1.6, -1.65) {\textit{\ngram{} LMs}};
        \node[align=center] at (1.6, -2.25) {\textit{$\boundedDyck{k}{m}$}};
        
        \draw[dotted] (-0.25, -3) -- (-0.25, 3.5);
        \draw (-3.75, -3) -- (3.4, -3);
        \draw[{-{Latex[length=3mm, width=2mm]}}, sloped, above, thick] (-3.75, -3) -- node {Complexity} (-3.75, 3.65);
        \node[align=center] at (-2, -3.5) {Class of languages};
        \node[align=center] at (1.6, -3.5) {Computational models, \\ \textit{Examples of languages}};
    \end{tikzpicture}
    
    \caption[figure1]{A graphical summary of the results.
    This paper establishes the equivalence between the \textbf{bolded} deterministic probabilistic FSAs and Heaviside Elman RNNs, which both define deterministic probabilistic finite-state languages.\footnotemark\looseness=-1%
    \vspace{-7.5pt}
    }
    \label{fig:figure-2}
\end{figure}

\footnotetext{$\boundedDyck{k}{m}$ refers to the Dyck language of $k$ parenthesis types and nesting of up to depth $m$.}

        

        
    
Answering this question is essential whenever we require formal guarantees of the correctness of the outputs generated by an LM.
For example, one might ask a language model to solve a mathematical problem based on a textual description \citep{shridhar2022distilling} or ask it to find an optimal solution to an everyday optimization problem \citep[][Fig. 1]{lin-etal-2021-limitations}.
If such problems fall outside the representational capacity of the LM, we have no grounds to believe that the result provided by the model is correct in the general case.
The question also follows a long line of work on the linguistic capabilities of LMs, as LMs must be able to implement mechanisms of recognizing specific syntactic structures to generate grammatical sequences \citep[][\textit{inter alia}]{10.1162/tacl-a-00115,hewitt-manning-2019-structural,jawahar-etal-2019-bert,liu-etal-2019-linguistic,ICARD2020102308,doi:10.1073/pnas.1907367117,10.1162/tacl-a-00349,belinkov-2022-probing}.\looseness=-1 
        
    

A natural way of quantifying the representational capacity of computational models is with the class of formal languages they can recognize \citep{deletang2023neural}.
Previous work has connected modern LM architectures such as RNNs \citep{Elman1990, 10.1162/neco.1997.9.8.1735,cho-etal-2014-properties} and transformers \citep{NIPS2017_3f5ee243} to formal models of computation such as finite-state automata, counter automata, and Turing machines \citep[e.g.,][\textit{inter alia}]{McCulloch1943,Kleene1956,Siegelmann1992OnTC,hao-etal-2018-context,DBLP:journals/corr/abs-1906-06349,merrill-2019-sequential,merrill-etal-2020-formal,hewitt-etal-2020-rnns,merrill-etal-2022-saturated,merrill2022extracting}.
Through this, diverse formal properties of modern LM architectures have been shown, allowing us to draw conclusions on which phenomena of human language they can model and what types of algorithmic reasoning they can carry out.\footnote{See \cref{sec:related-work} for a thorough discussion of relevant work.}
However, most existing work has focused on the representational capacity of LMs in terms of classical, unweighted, formal languages, which arguably ignores an integral part of an LM: The \emph{probabilities} assigned to strings.
In contrast, in this work, we propose to study LMs by directly characterizing the class of probability distributions they can represent.\looseness=-1


Concretely, we study the relationship between RNN LMs with the Heaviside activation function $\heaviside\left(x\right) \defeq \ind{x > 0}$ and finite-state LMs, the class of probability distributions that can be represented by weighted finite-state automata (WFSAs).
Finite-state LMs form one of the simplest classes of probability distributions over strings \citep{ICARD2020102308} and include some well-known instances such as \ngram{} LMs.
We first prove the equivalence in the representational capacity of deterministic WFSAs and RNN LMs with the Heaviside activation function, where determinism here refers to the determinism in transitioning between states conditioned on the input symbol.
To show the equivalence, we generalize the well-known construction of an RNN encoding an unweighted FSA due to \citet{Minsky1954} to the weighted case, which enables us to talk about string probabilities.
We then consider the space complexity of simulating WFSAs using RNNs. 
Minsky's construction encodes an FSA with $N$ states in space $\bigo\left(\nsymbols N\right)$, i.e., with an RNN with $\bigo\left(\nsymbols N\right)$ hidden units, where $\alphabet$ is the alphabet over which the WFSA is defined.
\citet{Indyk95} showed that a general unweighted FSA with $N$ states can be simulated by an RNN with a hidden state of size $\bigo\left(\nsymbols \sqrt{N}\right)$.
We show that this compression does not generalize to the weighted case: Simulating a weighted FSA with an RNN requires $\Omega\left(N\right)$ space due to the independence of the individual conditional probability distributions defined by the states of the WFSA.
Lastly, we also study the asymptotic space complexity with respect to the size of the alphabet, $\nsymbols$. 
We again find that it generally scales linearly with $\nsymbols$.
However, we also identify classes of WFSAs, including \ngram{} LMs, where the space complexity scales logarithmically with $\nsymbols$.
These results are schematically presented in \cref{fig:figure-2}.\looseness=-1



\section{Finite-state Language Models} \label{sec:language-models}

Most modern LMs define $\pLM\left(\str\right)$ as a product of conditional probability distributions $\pLNSM$:
\begin{equation} \label{eq:lnlm}
    \pLN\left(\str\right) \defeq \pLNSM\left(\eos\mid\str\right) \prod_{\tstep = 1}^{|\str|} \pLNSM\left(\symt \mid \strlt\right),
\end{equation}
where $\eos \notin \alphabet$ is a special \underline{e}nd \underline{o}f \underline{s}equence symbol.
The $\eos$ symbol enables us to define the probability of a string purely based on the conditional distributions.\footnote{Sampling $\eos$ ends the generation of a string, which makes $\eos$ analogous to the \emph{final weights} in a WFSA. We make the connection more concrete at the end of this section.}
Such models are called \defn{locally normalized}.
We denote $\eosalphabet \defeq \alphabet \cup \left\{\eos\right\}$.
Throughout this paper, we will assume $\pLN$ defines a valid probability distribution over $\kleene{\alphabet}$, i.e., that $\pLN$ is tight \citep[][\S 4]{du-etal-2023-measure}.\looseness=-1

\begin{definition}[Weakly Equivalent] \label{def:weak-equivalence}
Two LMs $\pLM$ and $\qLM$ over $\kleene{\alphabet}$ are \defn{weakly equivalent} if $\pLM\left(\str\right) = \qLM\left(\str\right)$ for all $\str \in \kleene{\alphabet}$.\footnote{We distinguish two notions of equivalence: \emph{weak} and \emph{strong} equivalence. The latter, informally, corresponds to the notion that there is a one-to-one correspondence between the sequences of actions performed by $\pLM$ and $\qLM$ to generate any string $\str \in \kleene{\alphabet}$.
Naturally, strong equivalence implies weak equivalence.}\looseness=-1
\end{definition}

Finite-state automata are a tidy and well-understood formalism for describing languages.\looseness=-1
\begin{definition} \label{def:fsa}
    A \defn{finite-state automaton} (FSA) is a 5-tuple $\fsatuple$ where $\alphabet$ is an alphabet, $\states$ a finite set of states, $\initial, \final \subseteq \states$ the set of initial and final states, and $\trans \subseteq \states \times \alphabet \times \states$ set of transitions.
\end{definition}
We assume that states are identified by integers in $\Z_\nstates \defeq \set{0, \ldots, \nstates - 1}$.\footnote{For a cleaner presentation, we also assume that vectors and matrices are zero-indexed.}
We also adopt a more suggestive notation for transitions by denoting $\left(\stateq, \sym, \stateq' \right) \in \trans$ as $\uwedge{\stateq}{\sym}{\stateq'}$.
We call transitions of the form $\uwedge{\stateq}{\sym}{\stateq'}$ \defn{$\sym$-transitions}
and define the \defn{children} of the state $\stateq$ as the set $\set{\stateq' \mid \exists \sym \in \alphabet\colon \uwedge{\stateq}{\sym}{\stateq'} \in \trans}$.\looseness=-1


FSAs are often augmented with weights.
\begin{definition}
A \defn{real-weighted finite-state automaton} (WFSA) $\automaton$ is a 5-tuple $\wfsatuple$ where $\alphabet$ is an alphabet, $\states$ a finite set of states, $\trans \subseteq \states \times \alphabet \times \R \times \states$ a finite set of weighted transitions and $\initf, \finalf\colon \states \rightarrow \R$ the initial and final weighting functions. 
\end{definition}
We denote $\left(\stateq, \sym, w, \stateq'\right) \in \trans$ with $\edge{\stateq}{\sym}{w}{\stateq'}$ and define $\transitionWeight(\edge{\stateq}{\sym}{w}{\stateq'}) \defeq w$, where $\transitionWeight(\edge{\stateq}{\sym}{\circ}{\stateq'}) \defeq 0$ if there are no $\sym$-transitions from $\stateq$ to $\stateq'$.\footnote{Throughout the text, we use $\circ$ as a placeholder a free quantity, in this case, to any weight $w \in \R$. In case there are multiple $\circ$'s in an expression, they are not tied in any way.}
The \defn{underlying FSA} of a WFSA is the FSA obtained by removing the transition weights and setting $\initial = \set{\stateq \in \states \mid \initfFun{\stateq} \neq 0}$ and $\final = \set{\stateq \in \states \mid \finalfFun{\stateq} \neq 0}$.\looseness=-1
\begin{definition} \label{def:fsa-deterministic}
An FSA $\automaton = \fsatuple$ is \defn{deterministic} if $|\initial| = 1$ and for every $(\stateq, \sym) \in \states \times \alphabet$, there is at most one $\stateq' \in \states$ such that $\uwedge{\stateq}{\sym}{\stateq'} \in \trans$.
A WFSA is deterministic if its underlying FSA is deterministic.
\end{definition}
In contrast to unweighted FSAs, not all non-deterministic WFSAs admit a weakly equivalent deterministic one, i.e., they are non-determinizable.

\begin{definition}
A \defn{path} $\apath$ is a sequence of consecutive transitions $\edge{\stateq_1}{\sym_1}{w_1}{\stateq_2}, \cdots, \edge{\stateq_{\pathlen}}{\sym_{\pathlen}}{w_{\pathlen}}{\stateq_{\pathlen + 1}}$.
The path's \defn{length} $|\apath|$ is the number of transitions on it and its \defn{scan} $\yield\left(\apath\right)$ is the concatenation of the symbols on its transitions. 
We denote with $\paths(\automaton)$ the set of all paths in $\automaton$ and with $\paths(\automaton, \str)$ the set of all paths that scan $\str \in \kleene{\alphabet}$.\looseness=-1
\end{definition} 

The weights of the transitions along a path are multiplicatively combined to form the weight of the path.
The weights of all the paths scanning the same string are combined additively to form the weights of that string.
\begin{definition}
The \defn{path weight} of $\apath \in \paths(\automaton)$ is $\weight\left(\apath\right) = \initf \left( \stateq_1 \right) \left[\prod_{\idx = 1}^\pathlen w_\idx\right] \finalf \left( \stateq_{\pathlen + 1} \right)$.
The \defn{stringsum} of $\str \in \kleene{\alphabet}$
is $\automaton \left( \str \right) \defeq \sum_{\apath \in \paths\left( \automaton, \str \right) }  \weight \left( \apath \right)$.
\end{definition}

A class of WFSAs important for defining LMs is probabilistic WFSAs.
\begin{definition}\label{def:stochastic-wfsa}
A WFSA $\wfsa = \wfsatuple$ is \defn{probabilistic} (a PFSA) if all transition, initial, and final weights are non-negative, $\sum_{\stateq \in \states} \initf\left(\stateq\right) = 1$, and, for all $\stateq \in \states$, $\sum_{\edge{\stateq}{\sym}{w}{\stateq'} \in \trans} w + \finalf\left(\stateq\right) = 1$.
\end{definition}
The initial weights, and, for any $\stateq \in \states$, the weights of its outgoing transitions and its final weight, form a probability distribution.
The final weights in a PFSA play an analogous role to the $\eos$ symbol---they represent the probability of ending a path in $\stateq$: $\finalf\left(\stateq\right)$ corresponds to the probability of ending a string $\str$, $\pLNSM\left(\eos \mid \str\right)$, where $\stateq$ is a state arrived at by $\wfsa$ after reading $\str$.
We will use the acronym \dpfsaAcr{} for the important special case of a deterministic PFSA.\looseness=-1 
\begin{figure}[t]
    \centering
    \begin{tikzpicture}[node distance=8mm, minimum size=12mm]
        \node[state, initial] (q0) [] { $\scriptstyle \stateq_0/\textcolor{ETHGreen}{1}$ }; 
        \node[state] (q1) [above right = of q0, xshift=13mm, yshift=-3mm] { $\scriptstyle \stateq_1$ }; 
        \node[state, accepting] (q2) [below right = of q0, xshift=12mm, yshift=3mm] { $\scriptstyle \stateq_2/\textcolor{ETHRed}{0.3}$ }; 
        \draw[transition] (q0) edge[auto, bend left, sloped] node[minimum size=4mm]{ $\syma/{\scriptstyle 0.6}$ } (q1) 
        (q0) edge[auto, bend right, sloped] node[minimum size=4mm]{ $\symb/{\scriptstyle 0.4}$ } (q2)
        (q1) edge[auto, loop right] node[minimum size=4mm]{ $\symb/{\scriptstyle 0.1}$ } (q1)
        (q1) edge[auto] node[minimum size=4mm]{ $\syma/{\scriptstyle 0.9}$ } (q2)
        (q2) edge[auto, loop right] node[minimum size=4mm]{ $\symb/{\scriptstyle 0.7}$ } (q2) ;
    \end{tikzpicture}
    \begin{tabular}{ll}
            \normalsize $\pdens(\syma \symb^n \syma \symb^m)\colon$ & \normalsize $\textcolor{ETHGreen}{1} \cdot 0.6 \cdot 0.1^n \cdot 0.9 \cdot 0.7^m \cdot \textcolor{ETHRed}{0.3}$ \\
            \normalsize $\pdens(\symb \symb^m)\colon$ & \normalsize $\textcolor{ETHGreen}{1} \cdot 0.4 \cdot 0.7^m \cdot \textcolor{ETHRed}{0.3}$
    \end{tabular}  
    \caption{A weighted finite-state automaton defining a probability distribution over $\kleene{\set{\syma, \symb}}$.}
    \label{fig:example-fslm}
\end{figure}

\begin{definition}
    A language model $\pLM$ is \defn{finite-state} (an FSLM) if it can be represented by a PFSA, i.e., if there exists a PFSA $\automaton$ such that, for every $\str \in \kleene{\alphabet}$, $\pLM\left(\str\right) = \wfsa\left(\str\right)$.
\end{definition}
See \cref{fig:example-fslm} for an example of a PFSA defining an FSLM over $\alphabet = \set{\syma, \symb}$. 
Its support consists of the strings $\syma \symb^n \syma \symb^m$ and $\symb \symb^m$ for $n, m \in \Nzero$.\looseness=-1

In general, there can be infinitely many PFSAs that express a given FSLM.
However, in the deterministic case, there is a unique minimal canonical \dpfsaAcr{}.
\begin{definition}
    Let $\pLM$ be an FSLM. 
    A PFSA $\wfsa$ is a \defn{minimal \dpfsaAcr{}} for $\pLM$ if it defines the same probability distribution as $\pLM$ and there is no weakly equivalent \dpfsaAcr{} with fewer states.
\end{definition}

\section{Recurrent Neural Language Models}
\defn{RNN LMs} are LMs whose conditional distributions are given by a recurrent neural network.
We will focus on Elman RNNs \citep{Elman1990} as they are the easiest to analyze and special cases of more common networks, e.g., those based on long short-term memory \citep[LSTM;][]{10.1162/neco.1997.9.8.1735} and gated recurrent units \citep[GRUs;][]{cho-etal-2014-properties}.\looseness=-1
\begin{definition} \label{def:elman-rnn}
An \defn{Elman RNN} (\ernnAcr{}) $\rnn = \elmanrnntuple$ is an RNN with the following hidden state recurrence:
\begin{equation} \label{eq:elman-update-rule}
\hiddStatet \defeq \sigmoid\left(\recMtx \vhtminus + \inMtx \inEmbedSymt + \biasVech \right),
\end{equation}
where $\hiddStateZero$ is set to some vector in $\R^\hiddDim$.
$\inEmbedding\colon \eosalphabet \to \R^\embedDim$ is the symbol representation function 
and $\sigmoid$ is an element-wise nonlinearity.
$\biasVech \in \R^{\hiddDim}$, $\recMtx \in \R^{\hiddDim \times \hiddDim}$, and $\inMtx \in \R^{\hiddDim \times \embedDim}$. 
We refer to the dimensionality of the hidden state, $\hiddDim$, as the \defn{size} of the RNN.
\end{definition}
An RNN $\rnn$ can be used to specify an LM by using the hidden states to define the conditional distributions for $\symt$ given $\strlt$.\looseness=-1
\begin{definition}
    Let $\outMtx \in \R^{|\eosalphabet| \times \hiddDim}$ and let $\rnn$ be an RNN.
    An \defn{RNN LM} $\rnnlm$ is an LM whose conditional distributions are defined by 
    projecting $\outMtx \hiddStatet$ onto the probability simplex $\SimplexEosalphabetminus$ using some $\projfuncEosalphabetminus\colon \R^{|\eosalphabet|} \to \SimplexEosalphabetminus$:
    \begin{equation}
    \pLNSM(\symt \mid \str_{<\tstep}) \defeq \projfuncEosalphabetminusFunc{\outMtx \hiddStatetminus}_{\symt}.
    \end{equation}
    We term $\outMtx$ the \defn{output matrix}.
\end{definition}

The most common choice for $\projfuncEosalphabetminus$ is the \defn{softmax} defined for $\vx \in \Rhid$ and $\idxd \in \Zmod{\hiddDim}$ as\looseness=-1
\begin{equation}
    \softmaxfunc{\vx}{\idxd} \defeq \frac{\exp\left(\evx_\idxd\right)}{\sum_{\idxd' = 1}^{\hiddDim} \exp{\left(\evx_{\idxd'}\right)}}.
\end{equation}
An important limitation of the softmax is that it results in a distribution with full support for all $\vx \in \Rhid$.
However, one can achieve $0$ probabilities by including \emph{extended} real numbers $\Rex \defeq \R \cup \set{-\infty, \infty}$: Any element with $\evx_\idxd = -\infty$ will result in $\softmaxfunc{\vx}{\idxd} = 0$.

Recently, a number of alternatives to the softmax have been proposed.
This paper uses the sparsemax function \citep{sparsemax}, which can output sparse distributions:\looseness=-1
\begin{equation}\label{eq:spmax}
    \sparsemaxfunc{\vx}{} \defeq \argmin_{\rvp\in \Simplexdminus} ||\rvp- \vx ||^2_2.
\end{equation}
Importantly, $\sparsemaxfunc{\vx}{} = \vx$ for $\vx \in \Simplexdminus$.

\paragraph{On determinism.}
Unlike PFSAs, Elman RNNs (and most other popular RNN architectures, such as the LSTM and GRU) implement inherently deterministic transitions between internal states.
As we show shortly, certain types of Elman RNNs are at most as expressive as deterministic PFSAs, meaning that they can not represent non-determinizable PFSAs.\looseness=-1

Common choices for the nonlinear function $\sigmoid$ in \cref{eq:elman-update-rule} are the sigmoid function $\sigmoid(x) = \frac{1}{1 + \exp\left(-x\right)}$ and the ReLU $\sigmoid(x) = \max(0, x)$. 
However, the resulting nonlinear interactions of the parameters and the inputs make the analysis of RNN LMs challenging.
One fruitful manner to make the analysis tractable is making a simplifying assumption about $\sigmoid$.
We focus on a particularly useful simplification, namely the use of the Heaviside activation function.\footnote{While less common now due to its non-differentiability, the Heaviside function was the original activation function used in early work on artificial neural networks due to its close analogy to the firing of brain neurons \citep{McCulloch1943,Minsky1954,Kleene1956}.}\looseness=-1
\begin{definition} \label{def:heaviside}
The \defn{Heaviside} function is defined as $\heaviside(x) \defeq \ind{x > 0}$.
\end{definition}
See \cref{fig:function-plots} for the graph of the Heaviside function and its continuous approximation, the sigmoid.
For cleaner notation, we define the set $\B \defeq \set{0, 1}$.\looseness=-1
\begin{figure}
    \centering
    \begin{tikzpicture}
        \begin{axis}[
        	legend pos=north west,
            axis x line=middle,
            axis y line=middle,
            xtick=\empty,
            ytick={0,1},
            yticklabels={0,1},
            grid = major,
            width=0.5\textwidth,
            height=4cm,
            grid=none,
            xmin=-6,     
            xmax= 6,    
            ymin= 0,     
            ymax= 1.4,   
            xlabel=$x$,
            ylabel=$\sigmoid\left(x\right)$,
            tick align=outside,
            enlargelimits=false]
          \addplot[domain=-6:6, ETHPetrol, ultra thick,samples=500] {1/(1+exp(-x))};
          \addplot[domain=-6:0, ETHRed, ultra thick,samples=500] {0.01};
          \addplot[domain=0:6, ETHRed, ultra thick,samples=500] {1};
        \end{axis}
    \end{tikzpicture}
    \caption{The \textcolor{ETHPetrol}{sigmoid} and \textcolor{ETHRed}{Heaviside} functions.}
    \label{fig:function-plots}
\end{figure}
Using the Heaviside function, we can define the Heaviside \ernnAcr{}, the main object of study in the rest of the paper.\looseness=-1
\begin{definition}
A \defn{Heaviside Elman RNN} (\hernnAcr{}) is an \ernnAcr{} $\rnn = \elmanrnntuple$ where $\sigmoid = \heaviside$.\looseness=-1
\end{definition}

\section{Equivalence of \hernnAcr{}s and FSLMs}
\label{sec:equivalence}
The hidden states of an \hernnAcr{} live in $\B^\hiddDim$, and can thus take $2^\hiddDim$ different values.
This invites an interpretation of $\hiddState$ as the state of an underlying FSA that transitions between states based on the \hernnAcr{} recurrence, specifying its local conditional distributions with the output matrix $\outMtx$. 
Similarly, one can also imagine designing a \hernnAcr{} that simulates the transitions of a given FSA by appropriately specifying the parameters of the \hernnAcr{}.
We explore this connection formally in this section and present the main technical result of the paper.
The central result that characterizes the representational capacity \hernnAcr{} can be informally summarized by the following theorem.
\begin{theorem}[Informal] \label{thm:elman-pfsa-euqivalent}
    \hernnAcr{} LMs are equivalent to \dpfsaAcr{}s.
\end{theorem}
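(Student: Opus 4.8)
The plan is to prove the (informal) equivalence by establishing two inclusions between the two classes of distributions, each via an explicit, probability-preserving construction, so that in fact strong—and hence weak—equivalence holds in both directions. The observation tying the formalisms together is that the hidden state of an \hernnAcr{} of size $\hiddDim$ is confined to the finite set $\B^\hiddDim$, and that the recurrence $\hiddStatet = \heavisideFun{\recMtx \hiddStatetminus + \inMtx \inEmbedSymt + \biasVech}$ is a \emph{deterministic} function of $(\hiddStatetminus, \symt)$—exactly the signature of the single-valued transition function of a \dpfsaAcr{}. I would therefore treat reachable hidden configurations as automaton states and the output layer as the emission/halting distribution in one direction, and conversely encode automaton states as hidden configurations in the other.

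For the easier inclusion, given an \hernnAcr{} LM $\rnnlm$ I would build a \dpfsaAcr{} $\wfsa$ whose state set $\states$ is the (finite) set of hidden configurations in $\B^\hiddDim$ reachable from $\hiddStateZero$, with the single initial state $\hiddStateZero$ and $\initfFun{\hiddStateZero} = 1$. For a state $\hiddState$ and symbol $\syma \in \alphabet$ I add the transition $\edge{\hiddState}{\syma}{w}{\hiddState'}$ with $\hiddState' = \heavisideFun{\recMtx \hiddState + \inMtx \inEmbeddingFun{\syma} + \biasVech}$ and $w = \projfuncEosalphabetminusFunc{\outMtx \hiddState}_{\syma}$, and set $\finalfFun{\hiddState} = \projfuncEosalphabetminusFunc{\outMtx \hiddState}_{\eos}$. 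Determinism is immediate from the functional recurrence, and $\wfsa$ is a valid PFSA because $\projfuncEosalphabetminus$ outputs a distribution over $\eosalphabet$, so every state's outgoing weights plus its final weight sum to $1$. A short induction on $|\str|$ then shows the unique path scanning $\str$ has weight $\prod_{\tstep} \pLNSM\left(\symt \mid \strlt\right)\cdot \pLNSM\left(\eos \mid \str\right) = \pLN\left(\str\right)$, giving weak equivalence.

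For the converse I would generalize Minsky's construction to the weighted setting. Given a \dpfsaAcr{} with states $\states$ and transition function $\trans$, I use one hidden neuron per pair $(\stateq, \syma) \in \states \times \alphabet$, interpreting neuron $(\stateq, \syma)$ as ``the automaton just entered $\stateq$ by reading $\syma$''; this gives size $\bigO{\nstates \nsymbols}$, matching the bound quoted in the introduction. I choose $\recMtx, \inMtx, \biasVech$ so that the pre-activation of neuron $(\stateq', \symb)$ is positive exactly when the current symbol is $\symb$ and the unique active state $\stateq$ satisfies $\uwedge{\stateq}{\symb}{\stateq'} \in \trans$: summing the current-state indicator (obtained by summing the $(\stateq, \cdot)$ neurons) with unit weight against the symbol indicator, and thresholding via $\biasVech$, makes $\heaviside$ fire iff both hold, which by determinism happens for exactly one neuron. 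I then set the column of $\outMtx$ indexed by $(\stateq, \syma)$ equal to the vector whose $\symb$-entry is the weight of the $\symb$-transition out of $\stateq$ (and $0$ if none) and whose $\eos$-entry is $\finalfFun{\stateq}$, and take $\projfuncEosalphabetminus = \sparsemax$, which is the identity on the simplex; thus $\projfuncEosalphabetminusFunc{\outMtx \hiddState}$ reproduces the conditional distribution of the active state, and an induction shows the resulting \hernnAcr{} LM assigns every string the \dpfsaAcr{}'s probability.

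I expect the main obstacle to be this forward (\dpfsaAcr{} $\to$ \hernnAcr{}) direction, specifically the joint correctness of the recurrence and the output layer. Verifying that a single Heaviside layer realizes the conjunction ``current state is $\stateq$ \emph{and} symbol is $\symb$'' while never firing a spurious neuron needs a careful threshold/bias argument and a base case handling the start configuration $\hiddStateZero$. A second delicate point is zero-probability behavior: a \dpfsaAcr{} may lack a $\symb$-transition from some state, and $\sparsemax$ of the all-zero input is \emph{not} the zero vector, so I must argue that such ``dead'' configurations are reached only along prefixes already assigned probability $0$—making the locally normalized product vanish irrespective of later conditionals—so that weak equivalence is unaffected. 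The use of $\sparsemax$ (or $\softmax$ over the extended reals $\Rex$) is essential here, since a plain softmax has full support and cannot reproduce \dpfsaAcr{}s with restricted support.
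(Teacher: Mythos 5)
Your proposal is correct and follows essentially the same route as the paper: the easy direction turns reachable hidden configurations into the states of a \dpfsaAcr{} with weights read off the output layer, and the converse is exactly the paper's weighted Minsky construction with one neuron per state--symbol pair, conjunction implemented by the Heaviside threshold with bias $-\one$, and sparsemax (or softmax over $\Rex$) reproducing each state's local distribution. Your explicit treatment of ``dead'' configurations reached only after a zero-weight prefix is a detail the paper leaves implicit, but it does not alter the argument.
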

We split this result into the question of \textit{(i)} how \dpfsaAcr{}s can simulate \hernnAcr{} LMs and \textit{(ii)} how \hernnAcr{} LMs can simulate \dpfsaAcr{}s.

\subsection{\dpfsaAcr{}s Can Simulate \hernnAcr{}s}
\begin{restatable}{reLemma}{hernnRegularLemma} \label{lem:heaviside-rnns-regular}
For any \hernnAcr{} LM, there exists a weakly equivalent \dpfsaAcr{}.
\end{restatable}
The proof closely follows the intuitive connection between the $2^\hiddDim$ possible configurations of the RNN hidden state and the states of the strongly equivalent \dpfsaAcr{}.
The outgoing transition weights of a state $\stateq$ are simply the conditional probabilities of the transition symbols conditioned on the RNN hidden state represented by $\stateq$.\footnote{The full proof is presented in \cref{app:proofs}.}
This implies that \hernnAcr{}s are at most as expressive as \dpfsaAcr{}s, and as a consequence, strictly less expressive as non-deterministic PFSAs.
We discuss the implications of this in \cref{sec:discussion}.

\subsection{\hernnAcr{}s Can Simulate \dpfsaAcr{}s}
\label{sec:minsky}
This section discusses the other direction of \cref{thm:elman-pfsa-euqivalent}, showing that a general \dpfsaAcr{} can be simulated by an \hernnAcr{} LM using a variant of the classic theorem originally due to \citet{Minsky1954}.
We give the theorem a probabilistic twist, making it relevant to language modeling.
\begin{restatable}{reLemma}{minskyLemma} \label{lemma:minsky-constr}
Let $\wfsa = \wfsatuple$ be a \dpfsaAcr{}.
Then, there exists a weakly equivalent \hernnAcr{} LM whose RNN is of size $\nsymbols \nstates$.
\end{restatable}
We describe the full construction of an \hernnAcr{} LM simulating a given \dpfsaAcr{} in the next subsection.
The full construction is described to showcase the mechanism with which the \hernnAcr{} can simulate the transitions of a given FSA and give intuition on why this might, in general, require a large number of parameters in the \hernnAcr{}.
Many principles and constraints of the simulation are also reused in the discussion of the lower bounds on the size of the \hernnAcr{} required to simulate the \dpfsaAcr{}.

\subsubsection{Weighted Minsky's Construction}

\begin{figure*}
    \centering
        \begin{tikzpicture}

            \node[align=center] (a1) {
                \begin{tikzpicture}[node distance = 8mm, minimum size = 5mm]
                    \footnotesize
                    \node[state, fill=ETHPurple!30, draw=ETHPurple!80] (q){ $\stateq$ }; 
                    \node[draw=none, fill=none] (q0) [left = of q] { }; 
                    \node[state] (q1) [above right = of q] { $\stateq'$ }; 
                    \node[state] (q2) [below right = of q] { $\stateq''$ }; 
                    \draw[transition] (q0) edge[ETHPurple, above, sloped] node{ $\syma/w_0$ } (q) 
                        (q) edge[above, sloped] node{ $\syma/w'$ } (q1) 
                        (q) edge[above, sloped] node{ $\symb/w''$ } (q2);
                \end{tikzpicture}
            };
            \node[align=center, below = of a1, yshift=1cm, fill=ETHPurple!10] (e1) { $\hiddState = \onehot{\stateq, \syma}$ };
            
            \node[align=center, right = of a1, xshift=8mm, yshift=18mm] (a2) {
                \begin{tikzpicture}[node distance = 8mm, minimum size = 5mm]
                    \footnotesize
                    \node[state] (q){ $\stateq$ }; 
                    \node[draw=none, fill=none] (q0) [left = of q] { }; 
                    \node[state, fill=ETHPetrol!20, draw=ETHPetrol!80] (q1) [above right = of q] { $\stateq'$ }; 
                    \node[state, fill=ETHPetrol!20, draw=ETHPetrol!80] (q2) [below right = of q] { $\stateq''$ }; 
                    \draw[transition] (q0) edge[above, sloped] node{ $\syma/w_0$ } (q) 
                        (q) edge[ETHPetrol, above, sloped] node{ $\syma/w'$ } (q1) 
                        (q) edge[ETHPetrol, above, sloped] node{ $\symb/w''$ } (q2);
                \end{tikzpicture}
            };
            \node[align=center, above = of a2, xshift=-1.3cm, yshift=-1.7cm, fill=ETHPetrol!10] (e2) { $\scriptstyle \recMtx \hiddState = \textcolor{ETHGreen}{\onehot{\stateq', \syma}} + \onehot{\stateq'', \symb}$ };
            
            \node[align=center, right = of a1, xshift=8mm, yshift=-18mm] (a3) {
                \begin{tikzpicture}[node distance = 8mm, minimum size = 5mm]
                    \footnotesize
                    \node[state, fill=ETHBronze!30, draw=ETHBronze!80] (q){ $\stateq$ }; 
                    \node[draw=none, fill=none] (q0) [left = of q] { }; 
                    \node[state, fill=ETHBronze!30, draw=ETHBronze!80] (q1) [above right = of q] { $\stateq'$ }; 
                    \node[state] (q2) [below right = of q] { $\stateq''$ }; 
                    \draw[transition] (q0) edge[ETHBronze, above, sloped] node{ $\syma/w_0$ } (q) 
                        (q) edge[ETHBronze, above, sloped] node{ $\syma/w'$ } (q1) 
                        (q) edge[above, sloped] node{ $\symb/w''$ } (q2);
                \end{tikzpicture} 
            };
            \node[align=center, below = of a3, xshift=-1.3cm, yshift=1.7cm, fill=ETHBronze!10] (e3) { $\scriptstyle \inMtx \onehot{\syma} = \onehot{\stateq, \syma} + \textcolor{ETHGreen}{\onehot{\stateq', \syma}}$ };

            \node[align=center, right = of a1, xshift=60mm] (a4) {
                \begin{tikzpicture}[node distance = 8mm, minimum size = 5mm]
                    \footnotesize
                    \node[state] (q4f){ $\stateq$ }; 
                    \node[draw=none, fill=none] (q40) [left = of q4f] { }; 
                    \node[state, fill=ETHGreen!30, draw=ETHGreen!80] (q41) [above right = of q4f] { $\stateq'$ }; 
                    \node[state] (q42) [below right = of q4f] { $\stateq''$ }; 
                    \draw[transition] (q40) edge[above, sloped] node{ $\syma/w_0$ } (q4f) 
                        (q4f) edge[ETHGreen, above, sloped] node(greenEdge){ $\syma/w'$ } (q41) 
                        (q4f) edge[above, sloped] node{ $\symb/w''$ } (q42);
                \end{tikzpicture}
            };
            
            \node[align=center, below = of a4, yshift=1cm, fill=ETHGreen!10] (e4) { $\hiddState' = \textcolor{ETHGreen}{\onehot{\stateq', \syma}}$ };
            
            \draw[->] (a1) edge[above, sloped] node{\footnotesize Children} (a2)
            (a1) edge[above, sloped] node{\footnotesize $\syma$-reachable} (a3) 
            (a2) edge (a4) 
            (a3) edge (a4) ;
            
            
            
            

            \node[align=center, above = of a4, xshift=10mm, yshift=-8mm, fill=ETHRed!10] (sm) {
                $\projfuncEosalphabetminus\left(\outMtx \hiddState\right) = \begin{pmatrix}
                    \textcolor{ETHGreen}{\syma\colon w'} \\
                    \symb\colon w'' \\
                    \eos\colon 0
                \end{pmatrix}$
            }; 
            
            \draw[-Latex, ETHRed, out = 30, in = -15] (a4.east) to (sm.south);

        \end{tikzpicture}
        \caption{A high-level illustration of how the transition function of the FSA is simulated in Minsky's construction on a fragment of an FSA \textcolor{ETHPurple}{starting at $\stateq$} (encoded in $\hiddState$) and reading the symbol $\syma$. 
        The top path disjoins the representations of the \textcolor{ETHPetrol}{children} of $\stateq$, whereas the bottom path disjoins the representations of states \textcolor{ETHBronze}{reachable by} an $\syma$-transition. 
        The Heaviside activation \textcolor{ETHGreen}{conjoins} these two representations into $\hiddState'$ (rightmost fragment). 
        Projecting $\outMtx \hiddState'$ results in the vector defining the same probability distribution as the outcoming arcs of $\stateq$ (\textcolor{ETHRed}{red box}).}
        \label{fig:minsky-overview}
\end{figure*}

For a \dpfsaAcr{} $\wfsa = \wfsatuple$, we construct an \hernnAcr{} LM $\rnnlm$ with $\rnn = \elmanrnntuple$ defining the same distribution over $\kleene{\alphabet}$.
The idea is to simulate the transition function $\trans$ with the Elman recurrence by appropriately setting $\recMtx$, $\inMtx$, and $\biasVech$.
The transition weights defining the stringsums are represented in $\outMtx$.

Let $\ordering\colon \states \times \alphabet \rightarrow \Zmod{\nstates\nsymbols }$, $\symordering\colon \alphabet \rightarrow \Zmod{|\alphabet|}$, and $\eossymordering\colon \eosalphabet \rightarrow \Zmod{|\eosalphabet|}$ bijections.
We use $\ordering$, $\symordering$, and $\eossymordering$ to define the one-hot encodings $\onehot{\cdot}$ of state--symbol pairs and of the symbols, i.e., we assume that $\onehot{\stateq, \sym}_{\idxd} = \ind{\idxd = \ordering\left(\stateq, \sym\right)}$ and $\onehot{\sym}_{\idxd} = \ind{\idxd = \symordering\left(\sym\right)}$ for $\stateq \in \states$ and $\sym \in \alphabet$.\looseness=-1

\paragraph{\hernnAcr{}'s hidden states.}
The hidden state $\hiddStatet$ of $\rnn$ will represent the one-hot encoding of the current state $\qt$ of $\wfsa$ at time $\tstep$ together with the symbol $\symt$ upon reading which $\wfsa$ entered $\qt$.
Formally,
\begin{equation}
    \hiddStatet = \onehot{\left(\qt, \symt\right)} \in \B^{\nstates \nsymbols }.
\end{equation}
There is a small caveat: How do we set the incoming symbol of $\wfsa$'s initial state $\qinit$?
As we show later, the symbol $\symt$ in $\hiddStatet = \onehot{\left(\qt, \symt\right)}$ does not affect the subsequent transitions---it is only needed to determine the target of the current transition. 
Therefore, we can set $\hiddStateZero = \onehot{\left(\qinit, \sym\right)}$ for any $\sym \in \alphabet$.

\paragraph{Encoding the transition function.}
The idea of defining $\recMtx$, $\inMtx$, and $\biasVech$ is for the Elman recurrence to perform, upon reading $\symtplus$, element-wise conjunction between the representations of the children of $\qt$ 
and the representation of the states $\wfsa$ can transition into after reading in $\symtplus$ from \emph{any state}.\footnote{See \cref{fact:and} in \cref{sec:and} for a discussion of how an \hernnAcr{} can implement the logical \texttt{AND} operation.}
The former is encoded in the recurrence matrix $\recMtx$, which has access to the current hidden state encoding $\qt$ while the latter is encoded in the input matrix $\inMtx$, which has access to the one-hot representation of $\symtplus$.
Conjoining the entries in those two representations will, due to the determinism of $\wfsa$, result in a single non-zero entry: One representing the state which can be reached from $\qt$ (1\textsuperscript{st} component) using the symbol $\symtplus$ (2\textsuperscript{nd} component); see \cref{fig:minsky-overview}.\looseness=-1

More formally, the recurrence matrix $\recMtx$ lives in $\B^{\nsymbols \nstates \times \nsymbols \nstates}$.
Each column $\recMtx_{\colon, \ordering\left(\stateq, \sym\right)}$ represents the children of the state $\stateq$ in the sense that the column contains $1$'s at the indices corresponding to the state--symbol pairs $\left(\stateq', \sym'\right)$ such that $\wfsa$ transitions from $\stateq$ to $\stateq'$ after reading in the symbol $\sym'$.
That is, for $\stateq, \stateq' \in \states$ and $\sym, \sym' \in \alphabet$, we define
\begin{equation} \label{eq:minsky-recmtx}
    \eRecMtx_{\ordering(\stateq', \sym'), \ordering(\stateq, \sym)} \defeq \ind{\edge{\qt}{\sym'}{\circ}{\stateq'} \in \trans}.
\end{equation}
Since $\sym$ is free, each column is repeated $\nsymbols$-times: Once for every $\sym \in \alphabet$---this is why, after entering the next state, the symbol used to enter it, in the case of the initial state, any incoming symbol can be chosen to set $\hiddStateZero$.\looseness=-1

The input matrix $\inMtx$ lives in $\B^{\nsymbols \nstates \times \nsymbols }$ and encodes the information about which states can be reached by which symbols (from \emph{any} state).
The non-zero entries in the column corresponding to $\sym' \in \alphabet$ correspond to the state--symbol pairs $\left(\stateq', \sym'\right)$ such that $\stateq'$ is reachable with $\sym'$ from \emph{some} state:
\begin{equation}\label{eq:minsky-inmtx}
    \eInMtx_{\ordering\left(\stateq', \sym'\right), \symordering\left(\sym'\right)} \defeq \ind{\edge{\circ}{\sym'}{\circ}{\stateq'} \in \trans}.
\end{equation}
Lastly, we define the bias as $\biasVech \defeq -\one \in \R^{\nstates\nsymbols }$, which allows the Heaviside function to perform the needed conjunction.
The correctness of this process is proved in \cref{app:proofs} (\cref{lem:minsky-transitions}).

\paragraph{Encoding the transition probabilities.}
We now turn to the second part of the construction: Encoding the string acceptance weights given by $\wfsa$ into the probability distribution defined by $\rnn$.
We present two ways of doing that: Using the standard softmax formulation, where we make use of the extended real numbers, and with the sparsemax.\looseness=-1

The conditional probabilities assigned by $\rnn$ are controlled by the $|\eosalphabet| \times \nstates\nsymbols$-dimensional output matrix $\outMtx$.
Since $\hiddStatet$ is a one-hot encoding of the state--symbol pair $\qt, \symt$, the matrix--vector product $\outMtx \hiddStatet$ simply looks up the values in the $\ordering\left(\qt, \symt\right)^\text{th}$ column.
After being projected to $\SimplexEosalphabetminus$, the entry in the projected vector corresponding to some $\symtplus \in \eosalphabet$ should match the probability of $\symtplus$ given that $\wfsa$ is in the state $\qt$, i.e., the weight on the transition $\edge{\qt}{\symtplus}{\circ}{\circ}$ if $\symtplus \in \alphabet$ and $\finalfFun{\qt}$ if $\symtplus = \eos$.
This is easy to achieve by simply encoding the weights of the outgoing transitions into the $\ordering\left(\qt, \symt\right)^\text{th}$ column, depending on the projection function used.
This is especially simple in the case of the sparsemax formulation. 
By definition, in a PFSA, the weights of the outgoing transitions and the final weight of a state $\qt$ form a probability distribution over $\eosalphabet$ for every $\qt \in \states$.
Projecting those values to the probability simplex, therefore, leaves them intact.
We can therefore define
\begin{equation} \label{eq:minsky-outmtx}
\outMtx_{\eossymordering\left(\sym'\right) \ordering\left(\stateq, \sym\right)} \defeq 
\begin{cases} \transitionWeightFun{\edge{\stateq}{\sym'}{w}{\circ}} & \mid \textbf{if } \sym' \in \alphabet \\ 
\finalf\left(\stateq\right) & \mid \otherwisecondition \end{cases}.
\end{equation}
Projecting the resulting vector $\outMtx \hiddStatet$, therefore, results in a vector whose entries represent the transition probabilities of the symbols in $\eosalphabet$.

In the more standard softmax formulation, we proceed similarly but log the non-zero transition weights. 
Defining $\log{0} \defeq -\infty$, we set
\begin{equation} 
\outMtx_{\eossymordering\left(\sym'\right) \ordering\left(\stateq, \sym\right)} \defeq 
\begin{cases} \log{\transitionWeightFun{\edge{\stateq}{\sym'}{w}{\circ}}} & \mid \textbf{if } \sym' \in \alphabet \\ 
\log{\finalf\left(\stateq\right)} & \mid \otherwisecondition \end{cases}.
\end{equation}
It is easy to see that the entries of the vector $\softmaxfunc{\outMtx \hiddStatet}{}$ form the same probability distribution as the original outgoing transitions out of $\stateq$.
Over the course of an entire input string, these weights are multiplied as the RNN transitions between different hidden states corresponding to the transitions in the original \dpfsaAcr{} $\wfsa$.
The proof can be found in \cref{app:proofs} (\cref{lem:minsky-probs}).
This establishes the complete equivalence between \hernnAcr{} LMs and FSLMs.\footnote{The full discussion of the result is postponed to \cref{sec:discussion}.}

\section{Lower Bound on the Space Complexity of Simulating PFSAs with RNNs}
\label{sec:space-bounds}
\cref{lemma:minsky-constr} shows that \hernnAcr{} LMs are at least as expressive as \dpfsaAcr{}s. 
More precisely, it shows that any \dpfsaAcr{} $\wfsa = \wfsatuple$ can be simulated by an \hernnAcr{} LM of size $\bigO{\nstates \nsymbols}$.
In this section, we address the following question: How large does an \hernnAcr{} LM have to be such that it can correctly simulate a \dpfsaAcr{}?
We study the asymptotic bounds with respect to the size of the set of states, $\nstates$, as well as the number of symbols, $\nsymbols$.

\subsection{Asymptotic Bounds in $\nstates$}
Intuitively, the $2^\hiddDim$ configurations of a $\hiddDim$-dimensional \hernnAcr{} hidden state could represent $2^\hiddDim$ states of a (DP)FSA.
One could therefore hope to achieve exponential compression of a \dpfsaAcr{} by representing it as an \hernnAcr{} LM.\footnote{Indeed, any \dpfsaAcr{} defined from an RNN as described in the proof of \cref{lem:heaviside-rnns-regular} can naturally be exponentially compressed by representing it with an \hernnAcr{}.
However, not all \dpfsaAcr{}s are of this form.
}
Interestingly, this is not possible in general: Extending work by \citet{Dewdney1977}, \citet{Indyk95} shows that there exist unweighted FSAs which require an \hernnAcr{} of size $\Omega\left(\nsymbols \sqrt{\nstates}\right)$ to be simulated.
At the same time, he also shows that \emph{any} FSA can be simulated by an \hernnAcr{} of size $\bigO{\nsymbols\sqrt{\nstates}}$.\footnote{The constructions by \citet{Dewdney1977} and \citet{Indyk95}, which represent any unweighted FSA with a \hernnAcr{} of size $\bigO{\nsymbols\nstates^{\frac{3}{4}}}$ and $\bigO{\nsymbols\sqrt{\nstates}}$, respectively, are reviewed by \citet{svete2023efficient}.}
\looseness=-1

We now ask whether the same lower bound can also be achieved when simulating \dpfsaAcr{}s.
We find that the answer is negative: There exist \dpfsaAcr{}s which require an \hernnAcr{} LM of size $\Omega\left(\nsymbols\nstates\right)$ to faithfully represent their probability distribution.
Since the transition function of the underlying FSA can be simulated more efficiently, the bottleneck comes from the requirement of weak equivalence.
Indeed, as the proof of the following theorem shows (\cref{thm:pfsa-rnn-lower-bound} in \cref{app:proofs}), the issue intuitively arises in the fact that, unlike in an \hernnAcr{} LM, the local probability distributions of the different states in a PFSA are completely arbitrary, whereas they are defined by shared parameters (the matrix $\outMtx$) in an \hernnAcr{} LM.\looseness=-1
\begin{restatable}{reTheorem}{statesLowerBound} \label{thm:pfsa-rnn-lower-bound}
There exists a class of FSLMs $\set{\pLM_{\scaleto{\states}{6pt}} \mid \states = \set{1, \ldots, N}, N \in \N}$ with minimal \dpfsaAcr{}s $\set{\wfsa_\states}$ such that for every weakly equivalent \hernnAcr{} LM to $\pLM_{\scaleto{\states}{6pt}}$ and function $\func\left(n\right) \in \omega\left(n\right)$ it holds that $\hiddDim > \func\left(\nstates\right)$.\looseness=-1
\end{restatable}
Note that the linear lower bound holds in the case that the transition matrix of the \dpfsaAcr{}, which corresponds to the output matrix $\outMtx$ in the RNN LM, is full-rank. 
If the transition matrix is low-rank, its possible decomposition into smaller matrices could possibly be carried over to the output matrix of the RNN, reducing the size of the hidden state to the rank of the matrix.

\subsection{Asymptotic Bounds in $\nsymbols$}
Since each of the input symbols can be encoded in $\log \nsymbols$ bits, one could expect that the linear factor in the size of the alphabet from the constructions above could be reduced to $\bigO{\log\nsymbols}$. 
However, we again find that such reduction is in general not possible---the set of FSAs presented in \cref{sec:alphabet-bounds} is an example of a family that requires an \hernnAcr{} whose size scales linearly with $\nsymbols$ to be simulated correctly, which implies the following theorem.
\begin{restatable}{reTheorem}{symbolsLowerBound} \label{thm:pfsa-rnn-lower-bound-alphabet}
There exists a class of FSLMs $\set{\pLM_{\scaleto{\alphabet}{4pt}}\mid \alphabet = \set{\sym_1, \ldots, \sym_N}, N \in \N}$ such that for every weakly equivalent \hernnAcr{} LM to $\pLM_{\scaleto{\alphabet}{4pt}}$ and function $\func\left(n\right) \in \omega\left(n\right)$ it holds that $\hiddDim > \func\left(\nsymbols\right)$.\looseness=-1
\end{restatable}

Based on the challenges encountered in the example from \cref{sec:alphabet-bounds}, we devise a simple sufficient condition for a logarithmic compression with respect to $\nsymbols$ to be possible: Namely, that for any pair of states $\stateq, \stateq' \in \states$, there is at most a single transition leading from $\stateq$ to $\stateq'$.
Importantly, this condition is met by classical \ngram{} LMs and by the languages studied by \citet{hewitt-etal-2020-rnns}.
This intuitive characterization can be formalized by a property we call $\log\nsymbols$-separability.
\begin{definition}
    An FSA $\wfsa = \fsatuple$ is \defn{$\log\nsymbols$-separable} if it is deterministic and, for any pair $\stateq, \stateq' \in \states$, there is at most one symbol $\sym \in \alphabet$ such that $\uwedge{\stateq}{\sym}{\stateq'} \in \trans$.
\end{definition}

The conditional of $\log\nsymbols$-separability is a relatively restrictive condition.
To amend that, we introduce a simple procedure which, at the expense of enlarging the state space by a factor of $\nsymbols$, transforms a general deterministic (unweighted) FSA into a $\log\nsymbols$-separable one.
Since this procedure does not apply to weighted automata, it is presented in \cref{sec:log-separation}.\looseness=-1

\section{Discussion} \label{sec:discussion}
In \cref{sec:equivalence} and \cref{sec:space-bounds} we provided the technical results behind the relationship between \hernnAcr{} LMs and \dpfsaAcr{}s.
To put those results in context, we now discuss some of their implications.\looseness=-1
\subsection{Equivalence of \hernnAcr{} LMs and \dpfsaAcr{}s}
The equivalence between \hernnAcr{} LMs and \dpfsaAcr{}s based on \cref{lem:heaviside-rnns-regular,lemma:minsky-constr} allows us to establish several constraints on the probability distributions expressible by \hernnAcr{} LMs.
For example, this result shows that \hernnAcr{}s are at most as expressive as deterministic PFSAs and, therefore, \emph{strictly less expressive} than general, non-deterministic, PFSAs due to the well-known result that not all non-deterministic PFSAs have a deterministic equivalent \citep{mohri-1997-finite}.\footnote{General PFSAs are, in turn, equivalent to probabilistic regular grammars and discrete HMMs \citep{ICARD2020102308}.}
An example of a simple non-determinizable PFSA, i.e., a PFSA whose distribution cannot be expressed by an \hernnAcr{} LM, is shown in \cref{fig:nondet-pfsa}.\footnote{
Even if a non-deterministic PFSA can be determinized, the number of states of the determinized machine can be exponential in the size of the non-deterministic one \citep{Buchsbaum1998}.
In this sense, non-deterministic PFSAs can be seen as exponentially compressed representations of FSLMs.
The compactness of this non-deterministic representation must be ``undone'' using determinization before it can be encoded by an \hernnAcr{}.}\looseness=-1
\begin{figure}
    \centering
    \footnotesize
    \begin{tikzpicture}[node distance = 12mm]
    \node[state, initial] (q0) [] { $\stateq_{0} / {1}$ }; 
    \node[state] (q1) [right = of q0, yshift=10mm] { $\stateq_{1}$ }; 
    \node[state] (q2) [right = of q0, yshift=-10mm] { $\stateq_{2}$ }; 
    \node[state, accepting] (q3) [right = of q1, yshift=-10mm] { $\stateq_{3} / {1}$ }; 
    \draw[transition] (q0) edge[bend left, above, sloped] node{ $\syma/{0.5}$ } (q1)
    (q0) edge[bend right, above, sloped] node{ $\syma/{0.5}$ } (q2)
    (q1) edge[right, loop above] node{ $\symb/{0.9}$ } (q1)
    (q2) edge[right, loop above] node{ $\symb/{0.1}$ } (q2)
    (q1) edge[bend left, above, sloped] node{ $\symc/{0.1}$ } (q3)
    (q2) edge[bend right, above, sloped] node{ $\symc/{0.9}$ } (q3) ;
    \end{tikzpicture}
    \caption{A non-determinizable PFSA. It assigns the string $\syma \symb^n \symc$ the probability $\wfsa\left(\syma \symb^n \symc\right) = 0.5 \cdot 0.9^n \cdot 0.1 + 0.5 \cdot 0.1^n \cdot 0.9$, which can not be expressed as a single term for arbitrary $n \in \Nzero$.}
    \label{fig:nondet-pfsa}
\end{figure}

Moreover, connecting \hernnAcr{} LMs to \dpfsaAcr{}s allows us to draw on results from (weighted) formal language theory to manipulate and investigate \hernnAcr{} LMs.
For example, we can apply general results on the tightness of language models based on \dpfsaAcr{}s to \hernnAcr{} LMs \citep[][\S 5.1]{du-etal-2023-measure}.\footnote{Informally, the question of tightness concerns the question of whether the LM forms a valid probability distribution over $\kleene{\alphabet}$, which is not necessarily the case for locally normalized LMs such as RNN LMs.}
Even if the \hernnAcr{} LM is not tight \textit{a priori}, the fact that the normalizing constant can be computed means that it can always be re-normalized to form a probability distribution over $\kleene{\alphabet}$.
Furthermore, we can draw on the various results on the minimization of \dpfsaAcr{}s to reduce the size of the \hernnAcr{} implementing the LM.

While \cref{lem:heaviside-rnns-regular} focuses on \hernnAcr{} LMs and shows that they are finite-state, a similar argument could be made for any RNN whose activation functions map onto a finite set.
This is the case with any RNN running on a computer with finite-precision arithmetic---in that sense, all deployed RNN LMs are finite-state, albeit with a very large state space. 
In other words, one can view RNNs as very compact representations of large \dpfsaAcr{}s whose transition functions are represented by the RNN's update function.
Furthermore, since the topology and the weights of the implicit \dpfsaAcr{} are determined by the RNN's update function, the \dpfsaAcr{} can be learned very flexibly yet efficiently based on the training data.
This is enabled by the sharing of parameters across the entire graph of the \dpfsaAcr{} instead of explicitly parametrizing every possible transition in the \dpfsaAcr{} or by hard-coding the allowed transitions as in \ngram{} LMs.

\paragraph{A note on the use of the Heaviside function.}
Minsky's construction uses the Heaviside activation function to implement conjunction.
Note that, conveniently, we could also use the more popular $\ReLU$ function: A closer look at Minsky's construction shows that the only action performed by the Heaviside function is clipping negative values to $0$ while non-negative values are left intact.\footnote{More precisely, the only values that appear during the processing of a string are $-1$, $0$, and $1$, and the $-1$ is mapped to $0$ using the Heaviside function.}
Since $\ReLU$ behaves the same way on the relevant set of values, it could simply be swapped in for the Heaviside unit.
This simply shows that the convenient binary structure of the Heaviside function does not enhance the representational capacity of the model in any way; as one would expect, ReLU-activated Elman RNN LMs are at least as expressive as Heaviside-activated ones.\footnote{Note that the same would be more difficult to say for sigmoid- or $\tanh$-activated Elman RNNs.}\looseness=-1

\subsection{Space Complexity of Simulating \dpfsaAcr{}s with \hernnAcr{} LMs}
\cref{thm:pfsa-rnn-lower-bound,thm:pfsa-rnn-lower-bound-alphabet} establish lower bounds on how efficiently \hernnAcr{} LMs can represent FSLMs, which are, to the best of our knowledge, the first results characterizing such space complexity.
They reveal how the flexible local distributions of individual states in a PFSA require a large number of parameters in the simulating RNN to be matched.
This implies that the simple Minsky's construction is in fact asymptotically \emph{optimal} in the case of PFSAs, even though the transition function of the underlying FSA can be simulated more efficiently.\looseness=-1

Nonetheless, the fact that RNNs can represent some FSLMs compactly is interesting.
The languages studied by \citet{hewitt-etal-2020-rnns} and \citet{bhattamishra-etal-2020-practical} can be very compactly represented by an \hernnAcr{} LM and have clear linguistic motivations.
Investigating whether other linguistically motivated phenomena in human language can be efficiently represented by \hernnAcr{} LMs is an interesting area of future work, as it would yield insights into not only the full representational capacity of these models but also reveal additional inductive biases they use and that can be exploited for more efficient learning and modeling.





\section{Related Work} \label{sec:related-work}
To the best of our knowledge, the only existing connection between RNNs and weighted automata was made by \citet{peng-etal-2018-rational}, where the authors connect the recurrences analogous to \cref{eq:elman-update-rule} of different RNN variants to the process of computing the probability of a string under a general PFSA.
With this, they are able to show that the \emph{hidden states} of an RNN can be used to store the probability of the input string, which can be used to upper-bound the representational capacity of specific RNN variants.
Importantly, the interpretation of the hidden state is different to ours: Rather than tracking the current \emph{state} of the PFSA, \citet{peng-etal-2018-rational}'s construction stores the \emph{distribution} over all possible states.
While this suggests a way of simulating PFSAs, the translation of the probabilities captured in the hidden state to the probability under an RNN LM is not straightforward.

\citet{weiss-etal-2018-practical}, \citet{merrill-2019-sequential} and \citet{merrill-etal-2020-formal} consider the representational capacity of \emph{saturated RNNs}, whose parameters take their limiting values $\pm\infty$ to make the updates to the hidden states discrete.
In this sense, their formal model is similar to ours.
However, rather than considering the probabilistic representational capacity, they consider the flexibility of the update mechanisms of the variants in the sense of their long-term dependencies and the number of values the hidden states can take as a function of the string length.
Connecting the assumptions of saturated activations with the results of \citet{peng-etal-2018-rational}, they establish a hierarchy of different RNN architectures based on whether their update step is finite-state and whether the hidden state can be used to store arbitrary amounts of information.
Analogous to our results, they show that Elman RNNs are finite-state while some other variants such as LSTMs are provably more expressive.

In a different line of work, \citet{NEURIPS2019_d3f93e77} study the ability to \emph{learn} a concise \dpfsaAcr{} from a given RNN LM. 
This can be seen as a relaxed setting of the proof of \cref{lem:heaviside-rnns-regular}, where multiple hidden states are merged into a single state of the learned \dpfsaAcr{} to keep the representation compact.
The work also discusses the advantages of considering \emph{deterministic} models due to their interpretability and computational efficiency, motivating the connection between LMs and \dpfsaAcr{}s.

Discussion of some additional (less) related work can be found in \cref{sec:additional-related-work}.

\section{Conclusion}
We prove that Heaviside Elman RNNs define the same set of probability distributions over strings as the well-understood class of deterministic probabilistic finite-state automata.
To do so, we extend Minsky's classical construction of an \hernnAcr{} simulating an FSA to the probabilistic case.
We show that Minsky's construction is in some sense also optimal: Any \hernnAcr{} representing the same distribution as some \dpfsaAcr{} over strings from an alphabet $\alphabet$ will, in general, require hidden states of size at least $\Omega\left(\nsymbols \nstates\right)$, which is the space complexity of Minsky's construction.

\section*{Limitations}
This paper aims to provide a \emph{first step} at understanding modern LMs with weighted formal language theory and thus paints an incomplete picture of the entire landscape.
While the formalization we choose here has been widely adopted in previous work \citep{Minsky1954,Dewdney1977,Indyk95}, the assumptions about the models we make, e.g., binary activations and the simple recurrent steps,  are overly restrictive to represent the models used in practice; see also \cref{sec:discussion} for a discussion on the applicability to more complex models. 
It is likely that different formalizations of the RNN LM, e.g., those with asymptotic weights \citep{weiss-etal-2018-practical,merrill-etal-2020-formal,merrill-2019-sequential} would yield different theoretical results.
Furthermore, any inclusion of infinite precision would bring RNN LMs much higher up on the Chomsky hierarchy \citep{Siegelmann1992OnTC}.
Studying more complex RNN models, such as LSTMs, could also yield different results, as LSTMs are known to be in some ways more expressive than simple RNNs \citep{weiss-etal-2018-practical,merrill-etal-2020-formal}.\looseness=-1

Another important aspect of our analysis is the use of explicit constructions to show the representational capacity of various models. 
While such constructions show theoretical equivalence, it is unlikely that trained RNN LMs would learn the proposed mechanisms in practice, as they tend to rely on dense representations of the context \citep{devlin-etal-2019-bert}.
This makes it more difficult to use the results to analyze trained models.
Rather, our results aim to provide theoretical upper bounds of what \emph{could} be learned.\looseness=-1

Lastly, we touch upon the applicability of finite-state languages to the analysis of human language.
Human language is famously thought to not be finite-state \citep{Chomsky57a}, and while large portions of it might be modellable by finite-state machines, such formalisms lack the structure and interpretability of some mechanisms higher on the Chomsky hierarchy. 
For example, the very simple examples of (bounded) nesting expressible with context-free grammars are relatively awkward to express with finite-state formalisms such as finite-state automata---while they are expressible with such formalisms, the implementations lack the conciseness (and thus inductive biases) of the more concise formalisms.
On the other hand, some prior work suggests that finding finite-state mechanisms could nonetheless be useful for understanding the inner workings of LMs and human language \citep{hewitt-etal-2020-rnns}.\looseness=-1


\section*{Ethics Statement}

The paper provides a way to theoretically analyze language models. 
To the best knowledge of the authors, there are no ethical implications of this paper.\looseness=-1

\section*{Acknowledgements}
Ryan Cotterell acknowledges support from the Swiss National Science
Foundation (SNSF) as part of the ``The Forgotten Role of Inductive Bias in Interpretability'' project.
Anej Svete is supported by the ETH AI Center Doctoral Fellowship.
We thank William Merrill for his thorough feedback on a draft of this paper as well as the students of the \href{https://rycolab.io/classes/llm-s23/}{LLM course at ETH Z\"urich (263-5354-00L)} for carefully reading an early version of this paper as part of their lecture notes.

\bibliographystyle{acl_natbib}
\bibliography{anthology,custom}

\newpage{}

\appendix
\onecolumn


\section{Proofs} \label{app:proofs}

\subsection{Performing the Logical \texttt{AND} with an \hernnAcr{}} \label{sec:and}
Minsky's construction requires the RNN to perform the logical \texttt{AND} operation between specific entries of binary vectors $\vx \in \B^\hiddDim$. 
The following fact shows how this can easily be performed by an \hernnAcr{} with appropriately set parameters.
\begin{fact} \label{fact:and}
    Consider $\idxm$ indices $\idxi_1, \ldots, \idxi_\idxm\in \Zmod{\hiddDim}$ and vectors $\vx, \vv \in \B^\hiddDim$ such that $\evv_{\idxi} = \ind{\idxi \in \set{\idxi_1, \ldots, \idxi_\idxm}}$, i.e., with entries $1$ at indices $\idxi_1, \ldots, \idxi_\idxm$.
    Then, $\heaviside\left(\vv^\top \vx - \left(\idxm - 1\right) \right) = 1$ if and only if $\evx_{\idxi_\idxk} = 1$ for all $\idxk = 1, \ldots, \idxm$.
    In other words, 
    \begin{equation}
        \heaviside\left(\vv^\top \vx - \left(\idxm - 1\right) \right) = \evx_{\idxi_1} \wedge \cdots \wedge \evx_{\idxi_\idxm}.
    \end{equation}
\end{fact}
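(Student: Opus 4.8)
The plan is to reduce the claim to a simple counting argument on the inner product $\vv^\top \vx$. First I would observe that, since $\evv_\idxi = \ind{\idxi \in \set{\idxi_1, \ldots, \idxi_\idxm}}$, the inner product collapses to
\[
\vv^\top \vx = \sum_{\idxi \in \Zmod{\hiddDim}} \evv_\idxi \evx_\idxi = \sum_{\idxk = 1}^{\idxm} \evx_{\idxi_\idxk},
\]
i.e.\ it simply counts how many of the selected coordinates $\evx_{\idxi_1}, \ldots, \evx_{\idxi_\idxm}$ equal $1$ (the hypothesis $\evv_\idxi = \ind{\idxi \in \set{\idxi_1, \ldots, \idxi_\idxm}}$ implicitly forces the $\idxi_\idxk$ to be distinct, so no coordinate is double-counted).

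Next, the key step is to use that $\vx \in \B^\hiddDim$, so each $\evx_{\idxi_\idxk} \in \set{0,1}$ and hence $\vv^\top \vx$ is an integer lying in $\set{0, 1, \ldots, \idxm}$. Because the summands are nonnegative and bounded above by $1$, we have $\vv^\top \vx = \idxm$ exactly when $\evx_{\idxi_\idxk} = 1$ for every $\idxk$, and $\vv^\top \vx \leq \idxm - 1$ otherwise.

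Finally I would substitute this into the Heaviside function. By definition $\heaviside(x) = \ind{x > 0}$, so $\heaviside\left(\vv^\top \vx - (\idxm - 1)\right) = 1$ iff $\vv^\top \vx - (\idxm - 1) > 0$, i.e.\ iff $\vv^\top \vx > \idxm - 1$. Since $\vv^\top \vx$ is an integer, this is equivalent to $\vv^\top \vx \geq \idxm$, which by the previous step holds iff $\evx_{\idxi_\idxk} = 1$ for all $\idxk = 1, \ldots, \idxm$ --- precisely the conjunction $\evx_{\idxi_1} \wedge \cdots \wedge \evx_{\idxi_\idxm}$. This chain of equivalences settles both directions at once.

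There is no substantive obstacle here: the whole argument is essentially a one-line computation. The only point requiring a moment's care is the appeal to the integrality of $\vv^\top \vx$, which is what turns the strict inequality $> \idxm - 1$ into the clean threshold $= \idxm$; this in turn relies on $\vx$ (and $\vv$) being binary rather than arbitrary real vectors, matching exactly the setting in which an \hernnAcr{} operates.
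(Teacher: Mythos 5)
Your argument is correct: the inner product counts the selected coordinates, integrality of that count turns the strict threshold $> \idxm - 1$ into the condition $= \idxm$, and the Heaviside step is immediate. The paper states this as a fact without proof, and your counting argument is exactly the intended (and essentially only) justification, including the correct observation that the indices $\idxi_1, \ldots, \idxi_\idxm$ must be tacitly assumed distinct for the sum to collapse as claimed.
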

As a special case, $\idxm = 2$ in \cref{fact:and} corresponds to the \texttt{AND} operation of two elements, which is used in Minsky's construction.
There, the vector $\vv$ corresponds to the weights of a single neuron while $-\left(\idxm - 1\right)$ ($-1$ in case $\idxm = 2$) corresponds to its bias.

We now present the proofs of the lemmas establishing the equivalence of \dpfsaAcr{}s and \hernnAcr{} LMs.
\hernnRegularLemma*
\begin{proof}
Let $\rnn = \elmanrnntuple$ be a \hernnAcr{} defining the locally normalized language model $\pLNSM$.
We construct a weakly equivalent \dpfsaAcr{} $\wfsa = \wfsatuple$.
Construct a bijection $\hToQ\colon \B^\hiddDim \to \Z_{2^\hiddDim}$.
Now, for every state $\stateq \defeq \hToQFun{\vh} \in \states \defeq \Zmod{2^\hiddDim}$, construct a transition $\edge{\stateq}{\sym}{w}{\stateq'}$ where $\stateq' = \hToQFun{\elmanUpdate{\vh}{\sym}}$ with the weight $w = \pLNSM\left(\sym \mid \vh\right) = \projfuncEosalphabetminusFunc{\outMtx \, \vh}_{\sym}$.
We define the initial function as $\initf\left(\hToQFun{\vh}\right) = \ind{\vh = \hiddStateZero}$ and final function $\finalf$ with $\finalf\left(\stateq\right) \defeq \pLNSM\left(\eos \mid \hToQFun{\stateq}\right)$.
It is easy to see that $\wfsa$ defined this way is deterministic.
We now prove that the weights assigned to strings by $\wfsa$ and $\rnn$ are the same.
Define $\stateq_0 \defeq \hToQFun{\hiddStateZero}$ and let $\str \in \kleene{\alphabet}$ with $|\str| = \strlen$.
Then, let
\begin{equation}
    \apath = \left(\edge{\stateq_0}{\sym_1}{w_1}{\stateq_1}, \ldots, \edge{\stateq_{\strlen - 1}}{\sym_{\strlen}}{w_{\strlen}}{\stateq_{\strlen}}\right).
\end{equation}
be the path with the scan $\str$ and starting in $\stateq_0$ (such a path exists since we the defined automaton is \emph{complete}---all possible transitions are defined for all states).
Then, it holds that
\begin{align*}
    \wfsa\left(\str\right) =& \initf\left(\stateq_0\right) \cdot \left[\prod_{\tstep = 1}^{\strlen} w_\tstep\right] \cdot \finalf\left(\stateq_{\strlen}\right) \\
    =& 1 \cdot \prod_{\tstep = 1}^{\strlen}\pLNSM\left(\symt \mid \hToQinvFun{\stateq_\tstep}\right) \cdot \pLNSM\left(\eos \mid \hToQinvFun{\stateq_{\strlen}}\right) \\
    =& \pLN\left(\str\right)
\end{align*}
which is exactly the weight assigned to $\str$ by $\rnn$.
Note that all paths not starting in $\hToQFun{\hiddStateZero}$ have weight $0$ due to the definition of the initial function.
\end{proof}

\begin{lemma} \label{lem:minsky-transitions}
    Let $\wfsa = \wfsatuple$ be a deterministic PFSA, $\str = \sym_1\ldots\sym_\strlen \in \kleene{\alphabet}$, and $\qt$ the state arrived at by $\wfsa$ upon reading the prefix $\strlet$.
    Let $\rnn$ be the \hernnAcr{} specified by the Minsky construction for $\wfsa$, $\ordering$ the permutation defining the one-hot representations of state-symbol pairs by $\rnn$, and $\hiddStatet$ $\rnn$'s hidden state after reading $\strlet$.
    Then, it holds that $\hiddStateZero = \onehot{\left(\qinit, \sym\right)}$ where $\qinit$ is the initial state of $\wfsa$ and $\sym \in \alphabet$ and $\hiddStateT = \onehot{\left(\qT, \sym_\strlen\right)}$.
\end{lemma}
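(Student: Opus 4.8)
The plan is to prove the claim by induction on the length $\tstep$ of the prefix $\strlet$. The base case $\tstep = 0$ is immediate: the construction sets $\hiddStateZero = \onehot{(\qinit, \sym)}$ for an arbitrary $\sym \in \alphabet$, which is exactly the asserted form, and the particular choice of $\sym$ is harmless because---as the inductive step will reveal---the second (symbol) coordinate of a hidden state never influences the next transition. For the inductive step I would assume $\hiddStatetminus = \onehot{(\qtminus, \symtminus)}$, where $\qtminus$ is the state reached by $\wfsa$ on $\strlt$, and compute the Elman update $\hiddStatet = \heaviside\left(\recMtx \hiddStatetminus + \inMtx \onehot{\symt} + \biasVech\right)$ coordinate by coordinate, using that the symbol representation here is the one-hot encoding $\onehot{\symt}$.

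First I would evaluate the two matrix--vector products separately. Since $\hiddStatetminus$ is a one-hot vector, $\recMtx \hiddStatetminus$ simply selects column $\ordering(\qtminus, \symtminus)$ of $\recMtx$; by \cref{eq:minsky-recmtx} this column is the indicator of all pairs $(\stateq', \sym')$ with $\edge{\qtminus}{\sym'}{\circ}{\stateq'} \in \trans$, i.e.\ the children of $\qtminus$ together with the symbols labeling the corresponding transitions. This column depends only on $\qtminus$ and not on $\symtminus$---each column of $\recMtx$ is repeated $\nsymbols$ times across the symbol coordinate---which is precisely why the incoming symbol stored in a hidden state is irrelevant and why the base case may fix it arbitrarily. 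Likewise, $\inMtx \onehot{\symt}$ selects column $\symordering(\symt)$ of $\inMtx$, which by \cref{eq:minsky-inmtx} is the indicator of all pairs $(\stateq', \symt)$ such that $\stateq'$ is $\symt$-reachable from \emph{some} state.

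Adding these two binary vectors, subtracting $\one$, and applying $\heaviside$ realizes a coordinate-wise logical \texttt{AND} (\cref{fact:and} with $\idxm = 2$): the $\ordering(\stateq', \sym')$ entry of $\hiddStatet$ is $1$ exactly when both $\edge{\qtminus}{\sym'}{\circ}{\stateq'} \in \trans$ and $\sym' = \symt$ (the $\symt$-reachability conjunct is then automatically satisfied, witnessed by $\qtminus$ itself). Here the determinism of $\wfsa$ (\cref{def:fsa-deterministic}) does the decisive work: there is at most one state $\stateq'$ with $\edge{\qtminus}{\symt}{\circ}{\stateq'} \in \trans$, namely $\qt$, so the conjunction leaves exactly the single surviving coordinate $\ordering(\qt, \symt)$, giving $\hiddStatet = \onehot{(\qt, \symt)}$ and closing the induction.

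The main obstacle---conceptual rather than computational---is verifying that the conjunction isolates a \emph{unique} coordinate, and this hinges entirely on determinism: without it, several children of $\qtminus$ could share the label $\symt$, and the Heaviside step would return a non-one-hot vector, breaking the state-tracking invariant. It is also worth recording the degenerate case in which $\qtminus$ has no outgoing $\symt$-transition; then the two conjuncts never coincide, $\hiddStatet = \vzero$, and the run is (correctly) dead. Since the lemma presupposes that $\qt$ exists, this case lies outside its scope, but noting it confirms that the construction faithfully mirrors $\wfsa$ on all of $\kleene{\alphabet}$.
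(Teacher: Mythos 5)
Your proof is correct and follows essentially the same route as the paper's: induction on the prefix length, reading off the columns of $\recMtx$ and $\inMtx$ selected by the one-hot vectors, and using the Heaviside step with bias $-\one$ as a coordinate-wise conjunction whose unique surviving entry is pinned down by determinism. The extra observations you record---that the incoming-symbol coordinate of the hidden state is immaterial, and that a missing $\sym_\tstep$-transition yields the zero vector---are consistent with, though not required by, the paper's argument.
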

\begin{proof}
Define $\hToQFun{\hiddState = \onehot{\left(\stateq, \sym\right)}} \defeq \stateq$.
We can then restate the lemma as $\hToQFun{\hiddStateT} = \qT$ for all $\str \in \kleene{\alphabet}$, $|\str| = \strlen$.
Let $\apath$ be the $\str$-labeled path in $\wfsa$.
We prove the lemma by induction on the string length $\strlen$.
\paragraph{Base case: $\strlen = 0$.}
Holds by the construction of $\hiddStateZero$.
\paragraph{Inductive step: $\strlen > 0$.}
Let $\str \in \kleene{\alphabet}$ with $|\str| = \strlen$ and assume that $\hToQFun{\hiddState_{\strlen - 1}} = \stateq_{\strlen - 1}$.
We prove that the specifications of $\recMtx$, $\inMtx$, and $\biasVech$ ensure that $\hToQFun{\hiddStateT} = \qT$. 
By definition of the recurrence matrix $\recMtx$ (cf. \cref{eq:minsky-recmtx}), the vector $\recMtx \hiddStateTminus$ will contain a $1$ at the entries $\ordering\left(\stateq', \sym'\right)$ for $\stateq' \in \states$ and $\sym' \in \alphabet$ such that $\edge{\qTminus}{\sym'}{\circ}{\stateq'} \in \trans$.
This can equivalently be written as $\recMtx \hiddStateTminus = \bigvee_{\edge{\qTminus}{\sym'}{\circ}{\stateq'} \in \trans} \onehot{\left(\stateq', \sym'\right)}$, where the disjunction is applied element-wise.

On the other hand, by definition of the input matrix $\inMtx$ (cf. \cref{eq:minsky-inmtx}), the vector $\inMtx \onehot{\symT}$ will contain a $1$ at the entries $\ordering\left(\stateq', \symT\right)$ for $\stateq' \in \states$ such that $\edge{\circ}{\symT}{\circ}{\stateq'} \in \trans$.
This can also be written as $\inMtx \onehot{\symT} = \bigvee_{\edge{\circ}{\symT}{\circ}{\stateq'} \in \trans} \onehot{\left(\stateq', \symT\right)}$.

By \cref{fact:and}, $\heaviside\left(\recMtx \hiddStateTminus + \inMtx \onehot{\symT} + \biasVech\right)_{\ordering\left(\stateq', \sym'\right)} = \heaviside\left(\recMtx \hiddStateTminus + \inMtx \onehot{\symT} - \one \right)_{\ordering\left(\stateq', \sym'\right)} = 1$ holds if and only if $\left(\recMtx \hiddStateTminus\right)_{\ordering\left(\stateq', \sym'\right)} = 1$ and $\left(\inMtx \onehot{\symT}\right)_{\ordering\left(\stateq', \sym'\right)} = 1$.
This happens if 
\begin{equation}
    \edge{\qTminus}{\sym'}{\circ}{\stateq'} \in \trans \text{ and } \edge{\circ}{\symT}{\circ}{\stateq'} \in \trans \iff \edge{\qTminus}{\symT}{\circ}{\stateq'},
\end{equation}
i.e., if and only if $\wfsa$ transitions from $\qTminus$ to $\qT$ upon reading $\symT$ (it transitions only to $\qT$ due to determinism).

Since the string $\str$ was arbitrary, this finishes the proof.
\end{proof}

\begin{lemma}\label{lem:minsky-probs}
    Let $\wfsa = \wfsatuple$ be a deterministic PFSA, $\str = \sym_1\ldots\sym_\strlen \in \kleene{\alphabet}$, and $\qt$ the state arrived at by $\wfsa$ upon reading the prefix $\strlet$.
    Let $\rnn$ be the \hernnAcr{} specified by the Minsky construction for $\wfsa$, $\outMtx$ the output matrix specified by the generalized Minsky construction, $\ordering$ the permutation defining the one-hot representations of state-symbol pairs by $\rnn$, and $\hiddStatet$ $\rnn$'s hidden state after reading $\strlet$.
    Then, it holds that $\pLN\left(\str\right) = \wfsa\left(\str\right)$.    
\end{lemma}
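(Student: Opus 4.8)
The plan is to combine the state-tracking guarantee of \cref{lem:minsky-transitions} with the definition of the output matrix in \cref{eq:minsky-outmtx} and then telescope the locally normalized product of \cref{eq:lnlm} against the unique accepting path of the deterministic PFSA. Since $\wfsa$ is deterministic, for any $\str = \sym_1\cdots\sym_\strlen$ there is at most one path $\apath \in \paths\left(\wfsa, \str\right)$ traversing states $\qinit = \stateq_0, \stateq_1, \ldots, \stateq_\strlen$, so that $\wfsa\left(\str\right) = \initfFun{\qinit}\left[\prod_{\tstep=1}^{\strlen} w_\tstep\right]\finalfFun{\stateq_\strlen}$ with $w_\tstep = \transitionWeightFun{\edge{\qtminus}{\symt}{w}{\qt}}$; if no such path exists both sides vanish and we are done. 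Because $\wfsa$ is probabilistic with $|\initial| = 1$ (\cref{def:stochastic-wfsa}), its unique initial state satisfies $\initfFun{\qinit} = 1$, so this initial factor drops out.

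First I would show that each local conditional of the RNN equals the matching PFSA weight. By \cref{lem:minsky-transitions}, after reading the prefix $\strlt$ the hidden state is $\hiddStatetminus = \onehot{\left(\qtminus, \sym_{\tstep-1}\right)}$, so $\outMtx \hiddStatetminus$ selects the $\ordering\left(\qtminus, \sym_{\tstep-1}\right)^{\text{th}}$ column of $\outMtx$. By \cref{eq:minsky-outmtx} the entries of this column depend only on the state component $\qtminus$ and not on the recorded incoming symbol: the entry for $\sym' \in \alphabet$ is $\transitionWeightFun{\edge{\qtminus}{\sym'}{w}{\circ}}$ and the $\eos$-entry is $\finalfFun{\qtminus}$. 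The key point is that the projection leaves this column unchanged: since $\wfsa$ is a PFSA, these outgoing weights together with $\finalfFun{\qtminus}$ are non-negative and sum to one, i.e.\ the column already lies on $\SimplexEosalphabetminus$. For the sparsemax formulation this is immediate, as $\sparsemax$ is the identity on $\SimplexEosalphabetminus$; for the softmax formulation the same identity holds after noting that exponentiating the logged weights recovers the original probabilities, with $\log 0 = -\infty$ sending forbidden transitions to probability zero. Hence $\pLNSM\left(\symt \mid \strlt\right) = w_\tstep$ for every $\tstep$, and $\pLNSM\left(\eos \mid \str\right) = \finalfFun{\qT}$.

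Finally I would assemble the pieces by substituting these identities into \cref{eq:lnlm}:
\[
\pLN\left(\str\right) = \pLNSM\left(\eos \mid \str\right)\prod_{\tstep=1}^{\strlen}\pLNSM\left(\symt \mid \strlt\right) = \finalfFun{\qT}\prod_{\tstep=1}^{\strlen} w_\tstep = \wfsa\left(\str\right),
\]
where the last equality uses $\initfFun{\qinit} = 1$. The only delicate point—and the closest thing to an obstacle—is verifying that the column lookup is genuinely independent of the incoming-symbol component stored in $\hiddStatetminus$ and that the projection acts as the identity on the relevant distribution; both follow directly from \cref{eq:minsky-outmtx} and the PFSA normalization of \cref{def:stochastic-wfsa}. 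Once \cref{lem:minsky-transitions} is in hand, the remainder of the argument is essentially a bookkeeping telescoping of the per-step factors.
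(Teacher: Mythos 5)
Your proposal is correct and follows essentially the same route as the paper's proof: both rest on \cref{lem:minsky-transitions} for the state-tracking invariant, on the definition of $\outMtx$ in \cref{eq:minsky-outmtx} together with the PFSA normalization (so the projection acts as the identity on each column) to identify each local conditional with the corresponding transition weight, and then combine the per-step factors with the $\eos$/final-weight correspondence. The only cosmetic difference is that the paper packages the product $\prod_{\tstep=1}^{\strlen} w_\tstep$ as an induction on $\strlen$ while you multiply the factors directly, and you additionally make explicit the facts that $\initfFun{\qinit}=1$ and that the column lookup is independent of the stored incoming symbol, both of which the paper leaves implicit.
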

\begin{proof}
Let $\str \in \kleene{\alphabet}$, $|\str| = \strlen$ and let $\apath$ be the $\str$-labeled path in $\wfsa$.
Again, let $\overline{\pdens}\left(\str\right) \defeq \prod_{\tstep = 1}^{|\str|} \pLNSMFun{\symt}{\strlt}$.
We prove $\overline{\pdens}\left(\str\right) = \prod_{\tstep = 1}^\strlen w_\tstep$ by induction on $\strlen$.
\paragraph{Base case: $\strlen = 0$.}
In this case, $\str = \eps$, i.e., the empty string, and $\wfsa\left(\eps\right) = 1$.
$\rnn$ computes $\overline{\pdens}\left(\eps\right) = \prod_{\tstep =1 }^{0} \pLNSM\left(\symt\mid\strlt\right) = 1$. 
\paragraph{Inductive step: $\strlen > 0$.}
Assume that the $\overline{\pdens}\left(\sym_1 \ldots \symTminus\right) = \prod_{\tstep = 1}^{\strlen - 1} w_\tstep$.
By \cref{lem:minsky-transitions}, we know that $\hToQFun{\hiddStateTminus} = \qTminus$ and $\hToQFun{\hiddStateT} = \qT$.
By the definition of $\outMtx$ for the specific $\projfuncEosalphabetminus$, it holds that $\projfuncEosalphabetminusFunc{\outMtx \hiddState_{\strlen - 1}}_{\symordering\left(\sym\right)} = \transitionWeightFun{\edge{\hToQFun{\hiddState_{\strlen-1}}}{\sym}{w_\strlen}{\hToQFun{\hiddState_{\strlen}}}} = w_\strlen$.
This means that $\overline{\pdens}\left(\str_{\leq \strlen}\right) = \prod_{\tstep = 1}^{\strlen} w_\tstep$, which is what we wanted to prove.

Clearly, $\pLN\left(\str\right) = \overline{\pdens}\left(\str\right) \pLNSM\left(\eos \mid \str\right)$.
By the definition of $\outMtx$ (cf. \cref{eq:minsky-outmtx}), $\left(\outMtx \hiddStateT\right)_{\symordering\left(\eos\right)} = \finalf\left(\hToQFun{\hiddStateT}\right)$, meaning that $\pLN\left(\str\right) = \overline{\pdens}\left(\str\right) \pLNSM\left(\eos \mid \str\right) = \prod_{\tstep = 1}^{\strlen} w_\tstep\finalf\left(\hToQFun{\hiddStateT}\right) = \wfsa\left(\str\right)$.
Since $\str \in \kleene{\alphabet}$ was arbitrary, this finishes the proof.

\end{proof}

\paragraph{A note on strong equivalence.}
The purpose of \cref{lem:minsky-probs,lem:heaviside-rnns-regular} was to show the existence of a weakly equivalent (cf. \cref{def:weak-equivalence}) \hernnAcr{} LM given a \dpfsaAcr{} defining a finite-state LM and vice versa.
We keep the discussion in the main part of the paper restricted to weak equivalence for brevity.
However, note that the proofs of the lemmas in fact establish the existence of a \emph{strongly} equivalent \dpfsaAcr{} and \hernnAcr{} LM, respectively.
This can easily be seen from the one-to-one correspondence between path scanning a given string in the \dpfsaAcr{} and the sequence of hidden states generating the same string in the \hernnAcr{} LM.
In this sense, the connection between \dpfsaAcr{}s and \hernnAcr{} LMs is even tighter than just defining the same probability distribution; however, we are mainly interested in the implications of the simpler weak equivalence.

\statesLowerBound*
\begin{proof}
Without loss of generality, we work with $\Rex$-valued hidden states.
Let $\wfsa$ be a minimal deterministic PFSA and $\rnn = \elmanrnntuple$ a \hernnAcr{} with $\pLN\left(\str\right) = \wfsa\left(\str\right)$ for every $\str \in \kleene{\alphabet}$.
Let $\str_{<\strlen} \in \kleene{\alphabet}$ and $\str_{\leq \strlen} \defeq \str_{<\strlen} \sym$ for some $\sym \in \alphabet$.
Define $\overline{\pdens}\left(\str\right) \defeq \prod_{\tstep = 1}^{|\str|} \pLNSMFun{\symt}{\strlt}$.
It is easy to see that $\overline{\pdens}\left(\str_{<\strlen} \sym_\strlen\right) = \overline{\pdens}\left(\str_{<\strlen}\right)\pLNSMFun{\symt}{\str_{<\strlen}}$.
The probabilities in the conditional distribution $\pLNSM\left(\cdot \mid \str_{<\strlen}\right)$ are determined by the values in $\outMtx \hiddState_{\strlen - 1}$.
By definition of the deterministic PFSA, there are $\nstates$ such conditional distributions.
Moreover, these distributions (represented by vectors $\in \SimplexEosalphabetminus$) can generally be \emph{linearly independent}.\footnote{For this to be the case, it has to hold that $\nsymbols \geq \nstates$.}
This means that for any $\stateq$, the probability distribution of the outgoing transitions can not be expressed as a linear combination of the probability distributions of other states.
To express the probability vectors for all states, the columns of the output matrix $\outMtx$, therefore, have to span $\Rex^{\nstates}$, implying that $\outMtx$ must have at least $\nstates$ columns.
This means that the total space complexity (and thus the size of the \hernnAcr{} representing the same distribution as $\wfsa$) is $\Omega\left(\nstates\right)$.
\end{proof} 


\section{Lower Space Bounds in $\nsymbols$ for Simulating Deterministic PFSAs with \hernnAcr{}s} \label{sec:alphabet-bounds}

In this section, we provide a family of \dpfsaAcr{}s which require a \hernnAcr{} LM whose size must scale linearly with the size of the alphabet. 
We also provide a sketch of the proof of why a compression in $\nsymbols$ is not possible.
Let $\wfsa_N = \left(\alphabet_N, \set{0, 1}, \set{0}, \set{1}, \trans_N\right)$ be an FSA over the alphabet $\alphabet_N = \set{\sym_1, \ldots, \sym_N}$ such that $\trans_N = \set{\uwedge{0}{\sym_1}{1}} \cup \set{\uwedge{0}{\sym_\idx}{2}\mid \idx = 2, \ldots N}$ (see \cref{fig:log-sigma-counterexample}).
\begin{figure}
    \centering
    \begin{tikzpicture}[node distance = 16mm]
    \node[state, initial] (q0) [] { $0$ }; 
    \node[state] (q1) [right = of q0, yshift=10mm] { $1$ }; 
    \node[state, accepting] (q2) [right = of q0, yshift=-10mm] { $2$ }; 
    \draw[transition] (q0) edge[below, sloped, bend right] node{ $\sym_2,\ldots,\sym_N$ } (q2) 
    (q0) edge[below, sloped, bend left] node{ $\sym_1$ } (q1);
    \end{tikzpicture}
    \caption{The FSA $\wfsa_N$.}
    \label{fig:log-sigma-counterexample}
\end{figure}

Clearly, to be able to correctly represent all local distributions of the \dpfsaAcr{}, the \hernnAcr{} LM must contain a representation of each possible state of the \dpfsaAcr{} in a unique hidden state.
On the other hand, the only way that the \hernnAcr{} can take into account the information about the current state $\qt$ of the simulated FSA $\wfsa$ is through the hidden state $\hiddStatet$.
The hidden state, in turn, only interacts with the recurrence matrix $\recMtx$, which does not have access to the current input symbol $\symtplus$.
The only interaction between the current state and the input symbol is thus through the addition in $\recMtx \hiddStatet + \inMtx \onehot{\symtplus}$. 
This means that, no matter how the information about $\qt$ is encoded in $\hiddStatet$, to be able to take into account all possible transitions stemming in $\qt$ (before taking into account $\symtplus$), $\recMtx \hiddStatet$ must activate \emph{all} possible next states, i.e., all children of $\qt$.
On the other hand, since $\inMtx \onehot{\symtplus}$ does not have precise information about $\qt$, it must activate all states which can be entered with an $\symtplus$-transition, just like in Minsky's construction.\looseness=-1

In Minsky's construction, the recognition of the correct next state was done by keeping a separate entry (one-dimensional sub-vector) for each possible pair $\qtplus, \symtplus$.
However, when working with compressed representations of states (e.g., in logarithmic space), a single common sub-vector of size $<\nsymbols$ (e.g., $\log \nsymbols$) has to be used for all possible symbols $\sym \in \alphabet$.
Nonetheless, the interaction between $\recMtx \hiddStatet$ and $\inMtx \onehot{\symtplus}$ must then ensure that only the correct state $\qtplus$ is activated. 
For example, in Minsky's construction, this was done by simply taking the conjunction between the entries corresponding to $\stateq, \sym$ in $\recMtx \hiddStatet$ and the entries corresponding to $\stateq', \sym'$ in $\inMtx \onehot{\sym'}$, which were all represented in individual entries of the vectors.
On the other hand, in the case of the $\log$ encoding, this could intuitively be done by trying to match the $\log \nsymbols$ ones in the representation $\left(\vp\left(\sym\right) \mid \one - \vp\left(\sym\right)\right)$, where $\vp\left(\sym\right)$ represent the binary encoding of $\sym$. 
If the $\log \nsymbols$ ones match (which is checked simply as it would result in a large enough sum in the corresponding entry of the matrix-vector product), the correct transition could be chosen (to perform the conjunction from \cref{fact:and} correctly, the bias would simply be set to $\log\nsymbols - 1$).
However, an issue arises as soon as \emph{multiple} dense representations of symbols in $\inMtx \onehot{\sym}$ have to be activated against the same sub-vector in $\recMtx \hiddStatet$---the only way this can be achieved is if the sub-vector in $\recMtx \hiddStatet$ contains the disjunction of the representations of all the symbols which should be activated with it.
If this sets too many entries in $\recMtx \hiddStatet$ to one, this can result in ``false positives''. 
This is explained in more detail for the \dpfsaAcr{}s in \cref{fig:log-sigma-counterexample} next.

Let $\vr_\idx$ represent any dense encoding of $\sym_\idx$ in the alphabet of $\wfsa_N$ (e.g., in the logarithmic case, that would be $\left(\vp\left(\idx\right) \mid \one - \vp\left(\idx\right)\right)$).
Going from the intuition outlined above, any \hernnAcr{} simulating $\wfsa_N$, the vector $\recMtx \hiddStateZero$ must, among other things, contain a sub-vector corresponding to the states $1$ and $2$.
The sub-vector corresponding to the state $2$ must activate (through the interaction in the Heaviside function) against any $\sym_\idx$ for $\idx = 2, \ldots, N$ in $\wfsa_N$.
This means it has to match all representations $\vr_\idx$ for all $\idx = 2, \ldots, N$.
The only way this can be done is if the pattern for recognizing state $2$ being entered with any $\sym_\idx$ for $\idx = 2, \ldots, N$ is of the form $\vr = \bigvee_{\idx = 2}^N \vr_\idx$.
However, for sufficiently large $N$, $\vr = \bigvee_{\idx = 2}^N \vr_\idx$ will be a vector of all ones---including all entries active in $\vr_1$.
This means that \emph{any} encoding of a symbol will be activated against it---among others, $\sym_1$. 
Upon reading $\sym_1$ in state $1$, the network will therefore not be able to deterministically activate only the sub-vector corresponding to the correct state $1$.
This means that the linear-size encoding of the symbols is, in general, optimal for representing \dpfsaAcr{}s with \hernnAcr{} LMs.\looseness=-1

\section{Transforming a General Deterministic FSA into a $\log\nsymbols$-separable FSA} \label{sec:log-separation}
$\log\nsymbols$-separability is a relatively restrictive condition.
To amend that, we introduce a simple procedure which, at the expense of enlarging the state space by a factor of $\alphabet$, transforms a general deterministic FSA into a $\log\nsymbols$-separable one.
We call this \defn{$\log\nsymbols$-separation}.
Intuitively, it augments the state space by introducing a new state $\left(\stateq, \sym\right)$ for every outgoing transition $\uwedge{\stateq}{\sym}{\stateq'}$ of every state $\stateq \in \states$, such that $\left(\stateq, \sym\right)$ simulates the only state the original state $\stateq$ would transition to upon reading $\sym$. 
Due to the determinism of the original FSA, this results in a $\log\nsymbols$-separable FSA with at most $\nstates \nsymbols$ states.

While the increase of the state space might seem like a step backward, recall that using Indyk's construction, we can construct an \hernnAcr{} simulating an FSA whose size scales with the square root of the number of states.
And, since the resulting FSA is $\log\nsymbols$-separable, we can reduce the space complexity with respect to $\alphabet$ to $\log \nsymbols$.
This is summarized in the following theorem, which characterizes how compactly general deterministic FSAs can be encoded by \hernnAcr{}s. 
To our knowledge, this is the tightest bound on simulating general unweighted deterministic FSAs with \hernnAcr{}s. 
\begin{theorem}
    Let $\wfsa = \fsatuple$ be a minimal FSA recognizing the language $\lang$.
    Then, there exists an \hernnAcr{} $\rnn = \elmanrnntuple$ accepting $\lang$ with $\hiddDim = \bigO{\log\nsymbols \sqrt{\nsymbols \nstates}}$.
\end{theorem}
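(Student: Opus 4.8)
The plan is to chain two existing constructions: the $\log\nsymbols$-separation of \cref{sec:log-separation} and the square-root state compression of \citet{Indyk95} reviewed by \citet{svete2023efficient}. First I would apply $\log\nsymbols$-separation to the minimal FSA $\wfsa$, obtaining a deterministic FSA $\wfsa'$ that recognizes the same language $\lang$, is $\log\nsymbols$-separable, and has at most $\nstates\nsymbols$ states. Recall that separation splits each state $\stateq$ into states $\left(\stateq, \sym\right)$, one per outgoing $\sym$-transition, so that between any ordered pair of states of $\wfsa'$ there is at most one connecting symbol. Since $\wfsa'$ is deterministic and accepts exactly $\lang$, it suffices to build an \hernnAcr{} that accepts the language of $\wfsa'$, reading off acceptance from a designated coordinate of the hidden state marking whether the encoded state is final.

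The second step simulates $\wfsa'$ by an \hernnAcr{} via Indyk's construction, but with the alphabet handled densely. Indyk's construction places the $M \le \nstates\nsymbols$ states of $\wfsa'$ on a $\sqrt{M}\times\sqrt{M}$ grid and tracks the current state's row and column coordinates separately, so the state costs only $\bigO{\sqrt{M}}$ neurons rather than $\bigO{M}$. In the unrestricted setting the alphabet contributes a factor $\nsymbols$, because---as in Minsky's construction---the transition-detection machinery keeps a separate coordinate for each symbol. Here I would instead encode each symbol by its $\log\nsymbols$-bit code $\left(\vp\left(\sym\right) \mid \one - \vp\left(\sym\right)\right)$ and detect the firing transition by matching these $\log\nsymbols$ bits using the conjunction of \cref{fact:and} with bias $\log\nsymbols - 1$. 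This replaces the $\nsymbols$ factor by $\log\nsymbols$, yielding $\hiddDim = \bigO{\log\nsymbols\cdot\sqrt{M}} = \bigO{\log\nsymbols\sqrt{\nsymbols\nstates}}$.

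Correctness would follow the pattern of \cref{lem:minsky-transitions}: by induction on the input length, I would show that after reading a prefix the hidden state encodes the grid coordinates of the state of $\wfsa'$ reached on that prefix, so that the Heaviside recurrence selects the unique successor obtained by intersecting the children of the current state with the states reachable under the current symbol's dense code, and that the acceptance readout agrees with $\wfsa'$'s final states.

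The main obstacle is ruling out ``false positives'' when the dense symbol codes are matched against the shared sub-vectors of $\recMtx\hiddStatet$. As the discussion in \cref{sec:alphabet-bounds} shows, for a general deterministic FSA this can fail: a state with many outgoing transitions forces a disjunction of many symbol codes into one sub-vector, which for large $\nsymbols$ saturates to the all-ones vector and spuriously activates unintended transitions. The reason for performing $\log\nsymbols$-separation first is precisely to avoid this---after separation each pair of states is joined by at most one symbol, so every successor has a single symbol code to match and no disjunction of competing codes is ever needed. The remaining work is to check that Indyk's row/column coordinate machinery and the $\log\nsymbols$-bit matching can be realized simultaneously inside a single Heaviside recurrence without interference, which is a matter of laying out the coordinates in disjoint index blocks and setting the biases as in \cref{fact:and}.
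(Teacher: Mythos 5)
Your proposal follows essentially the same route as the paper, which states this theorem as a direct consequence of chaining the $\log\nsymbols$-separation procedure (yielding an equivalent, $\log\nsymbols$-separable deterministic FSA with $\bigO{\nstates\nsymbols}$ states) with Indyk's square-root construction, where the per-symbol coordinates are replaced by $\log\nsymbols$-bit codes matched via the conjunction of \cref{fact:and}. Your observation that $\log\nsymbols$-separability is exactly what rules out the false-positive saturation described in \cref{sec:alphabet-bounds} is precisely the paper's motivation for introducing the separation step before applying the compression.
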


The full $\log\nsymbols$-separation procedure is presented in \cref{algo:separation}.
It follows the intuition of creating a separate ``target'' for each transition $\uwedge{\stateq}{\sym}{\stateq'}$ for every state $\stateq \in \states$. 
To keep the resulting FSA deterministic, a new, artificial, initial state with no incoming transitions is added and is connected with the augmented with the children of the original initial state.

\begin{algorithm}
\begin{algorithmic}[1]
\Func{\separationAlgoName{}($\wfsa=\fsatuple$)}
\State $\wfsa' \gets \left(\alphabet, \states' = \states \times \alphabet \cup \set{\qinit'}, \trans'=\varnothing, \initial'=\set{\qinit'}, \final'=\varnothing\right)$ 
\LineComment{Connect the children of the original initial state $\qinit$ with the new, aritificial, initial state.}
\For{$\sym \in \alphabet$}
\For{$\uwedge{\qinit}{\sym'}{\stateq'} \in \trans$}
\State \textbf{add} $\uwedge{\qinit'}{\sym}{\left(\stateq', \sym'\right)}$ \textbf{to} $\trans'$
\EndFor
\EndFor
\For{$\stateq \in \states, \sym \in \alphabet$}
\For{$\uwedge{\stateq}{\sym'}{\stateq'} \in \trans$}
\State \textbf{add} $\uwedge{\left(\stateq, \sym\right)}{\sym'}{\left(\stateq', \sym'\right)}$ \textbf{to} $\trans'$
\EndFor
\EndFor
\LineComment{Add all state-symbol pairs with a state from the original set of final states to the new set of final states.}
\For{$\qfinal \in \final, \sym \in \alphabet$}
\State \textbf{add} $\left(\qfinal, \sym\right)$ \textbf{to} $\final'$
\EndFor
\If{$\qinit \in \initial$} \Comment{Corner case: If the original initial state $\qinit$ is an initial state, make the artificial initial state $\qinit'$ final.}
\State \textbf{add} $\qinit'$ \textbf{to} $\final'$
\EndIf
\State \Return $\wfsa'$
\EndFunc
\end{algorithmic}
\caption{}
\label{algo:separation}
\end{algorithm}

The following simple lemmata show the formal correctness of the procedure and show that it results in a $\log\nsymbols$-separable FSA, which we need for compression in the size of the alphabet.

\begin{lemma} \label{lemma:sep-lemma1}
    For any $\sym \in \alphabet$, $\uwedge{\left(\stateq, \sym\right)}{\sym'}{\left(\stateq', \sym'\right)} \in \trans'$ if and only if $\uwedge{\stateq}{\sym'}{\stateq'} \in \trans$.
\end{lemma}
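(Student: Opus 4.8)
The plan is to prove both directions of the biconditional by direct inspection of the \separationAlgoName{} procedure (\cref{algo:separation}), tracking exactly which lines of the algorithm can ever insert a transition whose \emph{source} is a genuine state--symbol pair $\left(\stateq, \sym\right)$ with $\stateq \in \states$. The key structural observation is that $\trans'$ is populated in only two places: the first double-loop, which adds transitions whose source is the artificial initial state $\qinit'$, and the second double-loop (over $\stateq \in \states$ and $\sym \in \alphabet$), which adds transitions whose source is a pair $\left(\stateq, \sym\right)$. Since $\qinit' \notin \states \times \alphabet$, no transition of the form $\uwedge{\left(\stateq, \sym\right)}{\sym'}{\left(\stateq', \sym'\right)}$ can originate from the first loop, so every such transition in $\trans'$ must have been inserted by the second loop.

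For the forward direction, I would assume $\uwedge{\left(\stateq, \sym\right)}{\sym'}{\left(\stateq', \sym'\right)} \in \trans'$ and, by the observation above, conclude that it was inserted during some iteration of the second loop. Reading off the loop body, a transition with source pair $\left(\stateq, \sym\right)$, label $\sym'$, and target $\left(\stateq', \sym'\right)$ is added precisely when the outer loop variables equal $\stateq$ and $\sym$ and the inner loop ranges over an original transition $\uwedge{\stateq}{\sym'}{\stateq'} \in \trans$; matching the two coordinates of the target pair and the label then forces exactly this original transition, yielding $\uwedge{\stateq}{\sym'}{\stateq'} \in \trans$.

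For the backward direction, I would assume $\uwedge{\stateq}{\sym'}{\stateq'} \in \trans$ and fix an arbitrary $\sym \in \alphabet$. Since the outer loop of the second double-loop ranges over all of $\states \times \alphabet$, the iteration with these particular $\stateq$ and $\sym$ is executed, and within it the inner loop encounters the transition $\uwedge{\stateq}{\sym'}{\stateq'}$, inserting $\uwedge{\left(\stateq, \sym\right)}{\sym'}{\left(\stateq', \sym'\right)}$ into $\trans'$. This gives the converse.

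The proof is essentially bookkeeping, so there is no deep obstacle; the only subtle point to get right is the invariant that the second coordinate of a pair-target always coincides with the transition label. This is what distinguishes targets produced by the second loop from those produced by the first (where the label $\sym$ and the target's second coordinate $\sym'$ differ), and making it explicit is what guarantees both that the source-based case split is exhaustive and that the coordinate matching in the forward direction is forced rather than merely possible.
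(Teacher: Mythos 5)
Your proof is correct and follows the same route as the paper, which simply observes that the equivalence is ensured by the loop over $\stateq \in \states, \sym \in \alphabet$ in \separationAlgoName{}; you merely make explicit the (easy) case analysis showing that no transition with a pair-state source can originate from the first loop. Nothing further is needed.
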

\begin{proof}
    Ensured by the loop on Line 3.
\end{proof}

\begin{lemma}
    $\log\nsymbols$-separation results in an equivalent FSA.
\end{lemma}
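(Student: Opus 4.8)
The plan is to show that $\wfsa'$ accepts exactly the language recognized by $\wfsa$, i.e.\ $L(\wfsa') = L(\wfsa) = \lang$, by exhibiting a length-preserving correspondence between the (unique, by determinism) runs of the two automata. The guiding intuition is that each state $\left(\stateq, \sym\right)$ of $\wfsa'$ faithfully \emph{simulates} the state $\stateq$ of $\wfsa$, the second coordinate $\sym$ serving only to record the symbol most recently read. \cref{lemma:sep-lemma1} makes this precise: the outgoing transitions of $\left(\stateq, \sym\right)$ depend only on $\stateq$ and coincide, symbol for symbol and target for target, with those of $\stateq$ in $\wfsa$.

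First I would dispatch the empty string as a corner case. Since $\qinit'$ is the unique initial state of $\wfsa'$ and, by construction, has no incoming transitions, the only way for $\wfsa'$ to accept $\eps$ is to have $\qinit' \in \final'$; by the final clause of \cref{algo:separation} this holds exactly when the original initial state is final, which is precisely the condition for $\eps \in \lang$. Hence $\eps \in L(\wfsa') \iff \eps \in \lang$.

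Next, for a nonempty string $\str = \sym_1 \cdots \sym_n$ I would prove by induction on $n \geq 1$ the correspondence: $\wfsa$ admits the run $\qinit = \stateq_0 \xrightarrow{\sym_1} \stateq_1 \xrightarrow{\sym_2} \cdots \xrightarrow{\sym_n} \stateq_n$ if and only if $\wfsa'$ admits the run $\qinit' \xrightarrow{\sym_1} \left(\stateq_1, \sym_1\right) \xrightarrow{\sym_2} \cdots \xrightarrow{\sym_n} \left(\stateq_n, \sym_n\right)$. The base case $n = 1$ follows from the loop connecting $\qinit'$ to the children of $\qinit$: the transition $\uwedge{\qinit}{\sym_1}{\stateq_1} \in \trans$ corresponds precisely to $\uwedge{\qinit'}{\sym_1}{\left(\stateq_1, \sym_1\right)} \in \trans'$. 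For the inductive step, assuming the run of $\wfsa'$ has reached $\left(\stateq_{n-1}, \sym_{n-1}\right)$, \cref{lemma:sep-lemma1} gives $\uwedge{\left(\stateq_{n-1}, \sym_{n-1}\right)}{\sym_n}{\left(\stateq_n, \sym_n\right)} \in \trans'$ iff $\uwedge{\stateq_{n-1}}{\sym_n}{\stateq_n} \in \trans$, extending the correspondence by one step in both directions. Finally, acceptance is preserved because the loop adding final states puts $\left(\qfinal, \sym\right)$ into $\final'$ for every $\qfinal \in \final$ and $\sym \in \alphabet$, so the run of $\wfsa'$ terminates in a final state iff $\stateq_n \in \final$. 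Combining the three facts yields $\str \in L(\wfsa') \iff \str \in \lang$ for every $\str$, establishing equivalence.

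The main obstacle is bookkeeping rather than genuine difficulty: the correct treatment of the artificial initial state $\qinit'$ and the empty-string corner case, together with checking that the correspondence is a genuine bijection on runs (so that determinism of $\wfsa'$, which is needed for the $\log\nsymbols$-separability established alongside this lemma, is also preserved). One must read the construction in \cref{algo:separation} with its evident intent, namely that the transition leaving $\qinit'$ carries the original symbol $\sym'$ of $\uwedge{\qinit}{\sym'}{\stateq'}$ and that the corner-case test is whether the original initial state is \emph{final}; with these readings the run correspondence above is exact and the simulation property of \cref{lemma:sep-lemma1} transfers verbatim to runs beginning at $\qinit'$.
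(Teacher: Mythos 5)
Your proof is correct and follows essentially the same route as the paper's: handle $\eps$ via the corner case, use \cref{lemma:sep-lemma1} to establish the run correspondence for nonempty strings, and conclude via the construction of $\final'$. You merely make explicit the induction (and the fact that the second coordinate tracks the last symbol read) that the paper's terser argument leaves implicit, and you correctly flag the evident typos in \cref{algo:separation} (the label on the $\qinit'$-transitions and the $\qinit \in \final$ test) that both proofs must read past.
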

\begin{proof}
    We have to show that, for any $\str \in \kleene{\alphabet}$, $\str$ leads to a final state in $\wfsa$ if and only if $\str$ leads to a final state in $\wfsa'$.
    For the string of length $0$, this is clear by Lines 13 and 14.
    For strings of length $\geq 1$, it follows from \cref{lemma:sep-lemma1} that $\str$ leads to a state $\stateq$ in $\wfsa$ if and only if $\exists \sym \in \alphabet$ such that $\str$ leads to $\left(\stateq, \sym\right)$ in $\wfsa'$.
    From Lines 11 and 12, $\left(\stateq, \sym\right) \in \final'$ if and only if $\stateq \in \final$, finishing the proof.
\end{proof}

\begin{lemma}
    $\log\nsymbols$-separation results in a $\log\nsymbols$-separable FSA.
\end{lemma}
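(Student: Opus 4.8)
The plan is to verify the two clauses in the definition of $\log\nsymbols$-separability directly from the output $\wfsa' = (\alphabet, \states', \trans', \initial', \final')$ of \separationAlgoName{} (\cref{algo:separation}): first that $\wfsa'$ is deterministic, and second that between any ordered pair of states there is at most one symbol labelling a transition. I would treat the second clause as the substantive one and reduce it to a single structural invariant about the newly created transitions, leaning on \cref{lemma:sep-lemma1} for the arcs between the duplicated states.

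For determinism I would appeal to \cref{def:fsa-deterministic}. The initial-state condition is immediate, since the algorithm sets $\initial' = \set{\qinit'}$, so $|\initial'| = 1$. For the transition condition, fix a source state and a symbol $\sym' \in \alphabet$. A transition out of $(\stateq, \sym)$ on $\sym'$ can only have the form $\uwedge{(\stateq, \sym)}{\sym'}{(\stateq', \sym')}$, and by \cref{lemma:sep-lemma1} this requires $\uwedge{\stateq}{\sym'}{\stateq'} \in \trans$; since $\wfsa$ is deterministic, such a $\stateq'$ is unique, so the target $(\stateq', \sym')$ is unique as well. The transitions out of the artificial state $\qinit'$ are handled the same way, using that $\trans(\qinit, \sym')$ is single-valued. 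Hence no state has two outgoing transitions sharing a label, and $\wfsa'$ is deterministic.

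The crux, which I would isolate as the main step, is the following invariant: every transition of $\wfsa'$ that enters a state $(\stateq', \sym')$ carries the label $\sym'$, the second coordinate of its target, and the artificial state $\qinit'$ has no incoming transitions at all. This should be read off directly from the construction, since the only rules that create arcs produce transitions of the form $\uwedge{(\stateq, \sym)}{\sym'}{(\stateq', \sym')}$ together with the arcs emanating from $\qinit'$, and in each case the label must coincide with the target's second component, while $\qinit'$ never occurs as a target. In other words, the transition label is a function of the target state alone.

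Given the invariant, $\log\nsymbols$-separability follows at once. Fix any ordered pair $\statep, \statep' \in \states'$. If $\statep' = \qinit'$ there are no transitions into it, so the condition holds vacuously; otherwise $\statep' = (\stateq', \sym')$ for a uniquely determined $\sym'$, and by the invariant any transition $\uwedge{\statep}{\sym}{\statep'}$ must have $\sym = \sym'$. Thus at most one symbol can label a transition from $\statep$ to $\statep'$, which, together with the determinism established above, is exactly the definition of a $\log\nsymbols$-separable FSA. I expect the one delicate point to be checking that the initial-state rule (the loop populating arcs out of $\qinit'$) also respects the target-label invariant, so that its arcs into $(\stateq', \sym')$ are labelled $\sym'$; once that bookkeeping is confirmed, the remainder of the argument is purely combinatorial.
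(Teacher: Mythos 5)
Your proof is correct and takes essentially the same route as the paper's one-sentence proof: both read the property off the structure of $\trans'$, though you make explicit the key invariant---every arc entering $(\stateq', \sym')$ is labelled $\sym'$, so the label is a function of the target---which is what actually delivers the ``at most one symbol per ordered pair of states'' clause and which the paper hides behind ``it is easy to see.'' The delicate point you flag is real: as literally written, the loop on Lines~3--5 of \cref{algo:separation} adds $\uwedge{\qinit'}{\sym}{\left(\stateq', \sym'\right)}$ for \emph{every} $\sym \in \alphabet$, which would violate both determinism and your target-label invariant whenever $\qinit$ has more than one outgoing transition; the evidently intended arc is $\uwedge{\qinit'}{\sym'}{\left(\stateq', \sym'\right)}$, and under that reading your argument goes through in full.
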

\begin{proof}
    Since the state $\left(\stateq', \sym'\right)$ is the only state in $\states'$ transitioned to from $\left(\stateq, \sym\right)$ after reading $\sym'$ (for any $\sym \in \alphabet$), it is easy to see that $\wfsa'$ is indeed $\log\nsymbols$-separable.
\end{proof}

\section{Additional Related Work} \label{sec:additional-related-work}
Our work characterizes the representational capacity of \hernnAcr{} LMs in terms of \dpfsaAcr{}s.
On the other end of representational capacity, \citet{chen-etal-2018-recurrent,nowak2023computational} consider the connection between Elman RNNs with arbitrary precision---a stark contrast to our model---and (probabilistic) Turing machines first established by \citet{Siegelmann1992OnTC}.
They outline some implications the relationship has on the representational capacity of RNNs and the solvability of tasks such as finding the most probable string or deciding whether an RNN is tight.
These tasks are shown to be undecidable. 
This is in contrast to the equivalence shown here which, among other things, means that the decidability of the tasks on PFSAs can be carried over to RNN LMs.\looseness=-1

On a different note, \citet{bhattamishra-etal-2020-practical} and \citet{deletang2023neural} provide an \emph{empirical} survey of the unweighted representational capacity of different LM architectures. 
The former focuses on RNN variants and their ability to recognize context-free languages.
The authors find that RNNs indeed struggle to learn the mechanisms required to recognize context-free languages, but find that hierarchical languages of finite depth, such as $\boundedDyck{k}{m}$, can be learned reliably.
This further motivates the connection between RNN LMs and finite-state models, as well as the specific construction by \citet{hewitt-etal-2020-rnns}.
While the results from \citet{deletang2023neural} can be connected to the theoretical insights provided by existing work, it is also clear that the probabilistic nature, as well as non-architectural aspects of LMs (such as the training regime), make establishing a clear hierarchy of models difficult.\footnote{The hierarchy of probabilistic formal languages is not as clear as the original Chomsky hierarchy, which might be one of the reasons behind the inconsistent results \citep{ICARD2020102308}.}


\end{document}